\documentclass[11pt]{article}

\usepackage[toc,page,header]{appendix}
\usepackage{minitoc}


\usepackage[margin=1in]{geometry}

 \usepackage[authoryear,round]{natbib}
\usepackage[utf8]{inputenc} 
\usepackage[T1]{fontenc}    
\usepackage{hyperref}       
\usepackage{url}            
\usepackage{mdframed}
\usepackage{xcolor}
\definecolor{aliceblue}{rgb}{0.94, 0.97, 1.0}
\definecolor{mygreen}{RGB}{0,150,0}
\makeatletter
\newcommand{\newreptheorem}[2]{%
\newenvironment{rep#1}[1]{%
 \def\rep@title{#2 \ref{##1}}%
 \begin{rep@theorem}}%
 {\end{rep@theorem}}}
\makeatother
\usepackage{wrapfig}

\usepackage{url}            
\usepackage{booktabs}       
\usepackage{amsfonts}       
\usepackage{nicefrac}       
\usepackage{microtype}      
\usepackage{amsmath}
\usepackage{amssymb}
\usepackage{mathtools}
\usepackage{amsthm}
\usepackage{blindtext}
\usepackage{graphicx}

\usepackage{caption}
\usepackage{subcaption}
\usepackage{soul}

\usepackage[many]{tcolorbox}
\usepackage{listings}
\usepackage[scaled=0.85]{FiraMono}

\usepackage{color}
\definecolor{myfavblue}{rgb}{0.05, 0.2, 0.8}
\definecolor{keywords}{RGB}{255,0,90}
\definecolor{comments}{RGB}{0,0,113}
\definecolor{red}{RGB}{160,0,0}
\definecolor{green}{RGB}{0,150,0}
\definecolor{C0}{rgb}{0.12156862745098039, 0.4666666666666667, 0.7058823529411765}  %

\definecolor{myblue}{HTML}{3182bd}
\definecolor{myred}{HTML}{de2d26}

\definecolor{mydarkblue}{rgb}{0,0.08,0.45}
\hypersetup{
	colorlinks=true,
	linkcolor=mydarkblue,
	citecolor=mydarkblue,
	filecolor=mydarkblue,
	urlcolor=mydarkblue
}

\lstset{basicstyle=\ttfamily, keywordstyle=\bfseries}
\lstset{numbers=left,escapeinside={(*@}{@*)}}

\newtheorem{lemma}{Lemma}
\newtheorem{prop}{Proposition}
\newtheorem{theorem}{Theorem}
%
%
%
%

\usepackage{soul}

\title{\Large\textbf{On the Possibilities of AI-Generated Text Detection}}

\usepackage{authblk}
\author{Souradip Chakraborty\thanks{Equal Contributions. The authors are with the Department of Computer Science, University of Maryland, College Park, MD, USA. Email: \{schkra3,amritbd,sczhu,bangan,dmanocha,furongh\}@umd.edu}}
\author{Amrit Singh Bedi*}
\author{Sicheng Zhu} 
\author{Bang An} 
\author{\authorcr Dinesh Manocha}
\author{Furong Huang}

\affil{}


\begin{document}
\date{}
\maketitle
\doparttoc 
\faketableofcontents 

\part{} 

\begin{abstract}
Our work addresses the critical issue of distinguishing text generated by Large Language Models (LLMs) from human-produced text, a task essential for numerous applications. Despite ongoing debate about the feasibility of such differentiation, we present evidence supporting its consistent achievability, except when human and machine text distributions are indistinguishable across their entire support. Drawing from information theory, we argue that as machine-generated text approximates human-like quality, the sample size needed for detection increases. We establish precise sample complexity bounds for detecting AI-generated text, laying groundwork for future research aimed at developing advanced, multi-sample detectors. Our empirical evaluations across multiple datasets (Xsum, Squad, IMDb, and Kaggle FakeNews) confirm the viability of enhanced detection methods. We test various state-of-the-art text generators, including GPT-2, GPT-3.5-Turbo, Llama, Llama-2-13B-Chat-HF, and Llama-2-70B-Chat-HF, against detectors, including oBERTa-Large/Base-Detector, GPTZero. Our findings align with OpenAI's empirical data related to sequence length, marking the first theoretical substantiation for these observations.

\end{abstract}

 \section{Introduction}

Large Language Models (LLMs) like GPT-3 mark a significant milestone in the field of Natural Language Processing (NLP). Pre-trained on vast text corpora, these models excel in generating contextually relevant and fluent text, advancing a variety of NLP tasks including language translation, question-answering, and text classification. Notably, their capacity for zero-shot generalization obviates the need for extensive task-specific training. Recent research by \citep{shin2021multilingual} further highlights the LLMs' versatility in generating diverse writing styles, ranging from academic to creative, without the need for domain-specific training. This adaptability extends their applicability to various use-cases, including chatbots, virtual assistants, and automated content generation.

However, the advanced capabilities of LLMs come with ethical challenges \citep{risk1}. Their aptitude for generating coherent, contextually relevant text opens the door for misuse, such as the dissemination of fake news and misinformation. These risks erode public trust and distort societal perceptions. Additional concerns include plagiarism, intellectual property theft, and the generation of deceptive product reviews, which negatively impact both consumers and businesses. LLMs also have the potential to manipulate web content maliciously, influencing public opinion and political discourse.

Given these ethical concerns, there is an imperative for the responsible development and deployment of LLMs. The ethical landscape associated with these models is complex and multifaceted. Addressing these challenges is vital for harnessing the societal benefits that responsibly deployed LLMs can offer.
To this end, recent research has pivoted towards creating detectors capable of distinguishing text generated by machines from that authored by humans. These detectors serve as a safeguard against the potential misuse of LLMs. One central question underpinning this area of research is:
\vspace{-0.5em}
\begin{center}
\emph{"Is it possible to detect the AI-generated text in practice?"}    
\end{center}
\vspace{-0.5em}
Our work provides an affirmative answer to this question. Specifically, we demonstrate that detecting AI-generated text is nearly always feasible, provided multiple samples are collected, as illustrated in Figure~\ref{fig:main_figure_intuition}.
 \begin{figure}[!htbp]
    \centering
     \includegraphics[trim={0mm 30mm 0mm 80mm},clip, width=0.85\columnwidth]{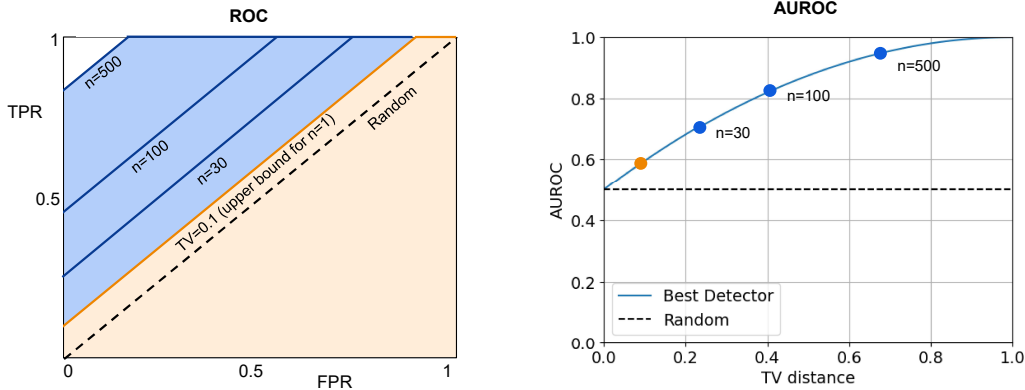}
    \caption{In light of the sample complexity bound presented in Theorem \ref{sample_complexity}, we show here pictorially how increasing the number of samples $n$ {used for detection} would affect the ROC of the best possible detector, which is achieved by the likelihood-ratio-based classifier. We note that in the ROC curve on the left for $\texttt{TV}(m,h)=0.1$, the \texttt{AUROC} of the best possible detector will be $0.6$ as derived in \citep{sadasivan2023can} (shown by an orange dot in right figure). The \texttt{AUROC} of $0.6$ would lead to the conclusion that detection is hard. In contrast, we note that by increasing the number of samples $n$, the ROC upper bound starts increasing towards $1$ exponentially fast (shown by the shaded blue region in the left figure for different values of $n$), and hence the \texttt{AUROC} of the best possible detector also starts increasing as shown by corresponding blue dots in the right figure. This ensures that the detection should be possible even in hard scenarios when $\texttt{TV}(m,h)$ norm is small.}
    \label{fig:main_figure_intuition}
\end{figure}
The necessity for collecting multiple samples is consistent with real-world settings where data is abundant. For example, in the context of social media bot identification, one can readily collect multiple posts to determine their origin, whether machine-generated or human-authored. This real-world applicability emphasizes the importance and urgency of developing sophisticated detection mechanisms for the ethical use of LLMs. We summarize our main contributions as follows.

(1) \textbf{Possibility of AI-generated text detection}: We utilize a mathematically rigorous approach to answer the question of the possibility of AI-generated text detection. We conclude that there is  a \textit{hidden possibility} of detecting the AI text, which improves with the text sequence (token) length.
 
  (2) \textbf{Sample complexity of AI-generated text detection}: We derive the sample complexity bounds, a first-of-its-kind tailored for detecting AI-generated text for both IID and non-IID settings.

  (3) \textbf{Comprehensive Empirical Evaluations:} We have conducted extensive empirical evaluations for real datasets Xsum, Squad, IMDb, and Fake News dataset with state-of-the-art generators (GPT-2, GPT3.5 Turbo, Llama, Llama-2 (13B), Llama-2 (70B)) and detectors (OpenAI’s Roberta (large), OpenAI’s Roberta (base), and ZeroGPT (SOTA Detector)).

 \section{Background on AI-Generated Text Detectors and Related Works}
Recent research has shown promising results in developing detection methods. 
Some of these methods use statistical approaches to identify differences in the linguistic patterns of human and machine-generated text. 
We survey the existing approaches here. \\

\noindent\textbf{Traditional approaches.} They involve statistical outlier detection methods, which employ statistical metrics such as entropy, perplexity, and $n$-gram frequency to differentiate between human and machine-generated texts \citep{stat1, stat2}. 
However, with the advent of ChatGPT (OpenAI) \citep{openai2023gpt4}, a new innovative statistical detection methodology, DetectGPT \citep{mitchell2023detectgpt}, has been developed. 
It operates on the principle that text generated by the model tends to lie in the negative curvature areas of the model's log probability. 
DetectGPT \citep{mitchell2023detectgpt} generates and compares multiple perturbations of model-generated text to determine whether the text is machine-generated or not based on the log probability of the original text and the perturbed versions. DetectGPT significantly outperforms the majority of the existing zero-shot methods for model sample detection with very high \texttt{AUC} scores (note that we use the terms \texttt{AUROC} and \texttt{AUC} interchangeably for presentation convenience).\\

\noindent\textbf{Classifier-based detectors.} In contrast to statistical methods, classifier-based detectors are common in natural language detection paradigms, particularly in fake news and misinformation detection \citep{schildhauer2022fake, zou2021ai}. 
OpenAI has recently fine-tuned a GPT model \citep{openai-text-classifier} using data from Wikipedia, WebText, and internal human demonstration data to create a web interface for a discrimination task using text generated by $34$ language models. 
This approach combines a classifier-based approach with a human evaluation component to determine whether a given text was machine-generated or not. 
These recent advancements in the field of detecting AI-generated text have significant implications for detecting and preventing the spread of misinformation and fake news, thereby contributing to the betterment of society \citep{schildhauer2022fake, zou2021ai, kshetri2022deep}.\\

\noindent\textbf{Watermark-based identification.} An alternative detection paradigm that has garnered significant interest in this field is the evolution of watermark-based identification \citep{wm1, wm2}. One of the most exciting works in recent times around this research revolves around watermarking and developing efficient watermarks for machine-generated text detection. Historically, watermarks have been employed in the realm of image processing and computer vision to safeguard copyrighted content and prevent intellectual property theft \citep{langelaar2000watermarking}. They can also be used for data hiding, where information is hidden within the watermark itself, allowing for secure and discreet transmission of information.
Early research by \citep{wmark_old1, wmark_old2} was among the first to demonstrate the potential of watermarks in language through syntax tree manipulations. More recently with the advent of ChatGPT, innovative work by \citep{kirchenbauer2023watermark} has shown how to incorporate watermarks by using only the LLM's logits at each step. The watermarking technique proposed by \citep{kirchenbauer2023watermark} allows for the verification of a watermark's authenticity by employing a specific hash function. More specifically, the soft watermarking approach by \citep{kirchenbauer2023watermark} involves categorizing tokens into ``green'' and ``red'' lists for generating distinct patterns. Watermarked language models are more likely to select tokens from the green list, based on prior tokens, resulting in watermarks that are typically unnoticeable to humans. These advancements in watermarking technology not only strengthen copyright protection and content authentication but also open up new avenues for research in areas such as privacy in language, secure communication, and digital rights management.\\

\noindent\textbf{Impossibility result}. The interesting recent literature by \citep{sadasivan2023can, krishna2023paraphrasing} showed the vulnerabilities of watermark-based detection methodologies using vanilla paraphrasing attacks. \citep{sadasivan2023can} developed a lightweight neural network-based paraphraser and applied it to the output text of the AI-generative model to evade a whole range of detectors, including watermarking schemes, neural network-based detectors, and zero-shot classifiers. \citep{sadasivan2023can} also introduced a notion of spoofing attacks where they exposed the vulnerability of LLMs protected by watermarking under such attacks. \citep{krishna2023paraphrasing} on the other hand, trained a paraphrase generation model capable of paraphrasing paragraphs and showed that paraphrased texts with DIPPER \citep{krishna2023paraphrasing} evade several detectors, including watermarking, GPTZero, DetectGPT, and OpenAI’s text classifier with a significant drop in accuracy. 
Additionally, \citep{sadasivan2023can} highlighted the impossibility of machine-generated text detection when the total variation (\texttt{TV}) norm between human and machine-generated text distributions is small.

In this work, we show that {there is a \textbf{\emph{hidden possibility}} of detecting the AI-generated text even if the \texttt{TV} norm between human and machine-generated text distributions is small. This result is in support of the recent detection possibility claims by \citet{krishna2023paraphrasing}.}

 \section{Proposed Approach: Methodology and Analysis}
\subsection{Notations and Definitions}

Before discussing the main results, let us define the notations used in this paper. 
We define the set of all possible texts (textual representations) as $\mathcal{S}$, a human-generated text distribution as $h(s)$ over $s\in\mathcal{S}$, and machine-generated text distribution as $m(s)$. 
Here $m(s)$ and $h(s)$ are valid probability density functions. 
We can also modify the same notation given a specific prompt (noted by $p$) or context (denoted by $c$) or question (denoted by $q$) accordingly, such as $\mathcal{S}_c$, $h(s~|~ p,c,q)$, and $m(s~|~ p,c, q)$  respectively. However, for the sake of clarity and ease of discussion in this work, we will omit the use of complex notation.

In the literature, the problem of detecting AI-generated text is considered as a binary classification problem. 
{The (potentially nonlinear and complex) {detector $D(s)$ maps the sample $s\in\mathcal{S}$ to $\mathbb{R}$ for possible binary classification}, and then compares it against a threshold $\gamma$ to perform detection.
 {$D(s)\ge\gamma$} is classified as AI-generated while $D(s)<\gamma$ is categorized as human-generated}.
For the detector $D(s)$ to detect whether the text samples $s$ is generated from the machine or not, we need to study the receiver operating characteristic curve (ROC curve) \citep{fawcett2006introduction}, which involves two terms, namely True Positive Rate (\text{TPR}) and False Positive Rate (\text{FPR}). Once we obtain ROC, we can study the area under the ROC curve \texttt{AUROC}, which characterizes the detection performance of detector $D$. The upper bound on \texttt{AUROC} describes the performance of the best possible detector. 

Under a detection threshold $\gamma$, \text{TPR} and \text{FPR} are denoted as $\text{TPR}_{\gamma}$ and $\text{FPR}_{\gamma}$ respectively:
\begin{align}
\text{TPR}_{\gamma}: & \text{Probability of detecting AI-generated text as AI-generated {under threshold $\gamma$}}, \label{TPR}
\\
\text{FPR}_{\gamma}: & \text{Probability of detecting {human}-generated text as {AI-generated} {under threshold $\gamma$}}. \label{FPR}
\end{align}
The rigorous definitions of $\text{TPR}_{\gamma}$ and $\text{FPR}_{\gamma}$ are as follows.
\begin{align}
    \text{TPR}_{\gamma}= & \mathbb{P}_{s\sim m(\cdot)} [D(s)\geq \gamma] = \int \mathbb{I}_{\{D(s)\geq \gamma\}} \cdot m(s) \cdot ds, \label{TPR_definition}
    \\
    \text{FPR}_{\gamma} = &\mathbb{P}_{s\sim h(\cdot)} [D(s)\geq \gamma]=\int \mathbb{I}_{\{D(s)\geq \gamma\}} \cdot h(s)\cdot ds,\label{FPR_definition}
\end{align}
where $\mathbb{I}_{\{\text{condition}\}}$ is the indicator function which takes value $1$ if the condition is true, and $0$ otherwise. 
Note that without loss of generality, we have chosen to consider $m(s)$ and $h(s)$ as the probability density function of machine and human on a sample $s$ by considering continuous $s$ (as also considered in \citep{sadasivan2023can}, but similar results hold for discrete $s$ by replacing the integral with a summation and by considering  $m(s)$ and $h(s)$ as the probability mass function of machine and human on a sample $s$.

Both $\text{TPR}_{\gamma}$ and $\text{FPR}_{\gamma}$ are 
{within the closed interval [0, 1]} for any threshold $\gamma$.
For a good detector, $\text{TPR}_{\gamma}$ should be as high as possible, and $\text{FPR}_{\gamma}$ should be as low as possible. 
As a result, a high \textit{area under the ROC curve} ($\texttt{AUROC}$) is desirable for detection.
$\texttt{AUROC}$ is between 1/2 and 1, i.e., $\texttt{AUROC}$$\ \in[1/2,1]$.
An $\texttt{AUROC}$ value of $1/2$ means a random detection and a value of 1 indicates a perfect detection. For efficient detection, the goal is to design a detector $D$ such that $\texttt{AUROC}$ is as high as possible. 
 \subsection{Hidden Possibilities of AI-Generated Text Detection}

To study the \texttt{AUROC} for any detector $D$, we start by invoking LeCam's lemma \citep{le2012asymptotic,wasserman2013} which states that for any distributions $m$ and $h$, given an observation $s$, the minimum sum of Type-I and Type-II error probabilities in testing whether $s \sim m$ versus $s \sim h$ is equal to $1 - \texttt{TV} (m, h)$. Hence, mathematically, we can write
\begin{align}
    \underbrace{\mathbb{P}_{s\sim h(\cdot)} [D(s)\geq \gamma]}_\text{Type-I error (false positive)} +  
    \underbrace{\mathbb{P}_{s\sim m(\cdot)} [D(s) < \gamma]}_\text{Type-II error (false negative)}   \geq 1 - \texttt{TV} (m, h),
    \end{align}
    for any detector $D$ and {any threshold $\gamma$}. 
    We note that the above bound is tight and can always be achieved with equality by likelihood-ratio-based detectors for \emph{any} distribution $m$ and $h$, by the Neyman-Pearson Lemma \citep[Chapter 11]{cover1999elements}. 
    We restate the lemma for completeness and discuss its tightness in Appendix~\ref{proof_lecam}.
    From the definitions of {TPR} and {FPR} in \eqref{TPR_definition}-\eqref{FPR_definition}, it holds that 
    \begin{align}
    \text{FPR}_{\gamma} + 1 - \text{TPR}_{\gamma} \geq  1 - \texttt{TV}(m, h),
    \end{align}
which implies that
    \begin{align}\label{ROC_upper}
    \text{TPR}_{\gamma} \leq \min\{\text{FPR}_{\gamma} + \texttt{TV}(m, h),1\},
\end{align}
where $\min$ is used because $\text{TPR}_{\gamma}\in[0,1]$. The upper bound in \eqref{ROC_upper} is called the ROC upper bound and is the bound leveraged in one of the recent works  \citep{sadasivan2023can} to derive \texttt{AUROC} upper bound $\texttt{AUC} \leq \frac{1}{2} + \texttt{TV} (m, h) - \frac{\texttt{TV} (m, h)^2}{2}$ which holds for any $D$. This upper bound led to the claim of the impossibility of detecting the AI-generated text whenever $\texttt{TV} (m, h)$ is small. \\

\noindent\textbf{Hidden Possibility.} However, we note that the claim of impossibility from the \texttt{AUROC} upper bound could be too conservative for detection in practical scenarios. For instance, we provide a \textit{\textbf{motivating example}} of detecting whether an account on Twitter is an AI-bot or human. It is natural that we will have a collection of text samples from the account, denoted by $\{s_i\}_{i=1}^n$, and it is realistic to assume that $n$ is very high.  Therefore, the natural practical question is whether we can detect if the provided text set $\{s_i\}_{i=1}^n$ is machine-generated or human-generated. With this motivation, we next explain that detection is always possible. 

We formalize the problem setting and prove our claim by utilizing the existing results in the information theory literature. Let us consider the same setup as detailed before, while 
 we are given a set of samples $S:=\{s_i\}_{i=1}^n$. For simplicity, we assume that the samples are  i.i.d. drawn from either the human $h$ or machine $m$. 
Interestingly, now the hypothesis test can be re-written as 
\begin{align}
    H_0 : S \sim m^{\otimes n} \hspace{4mm} \texttt{v.s.} \hspace{4mm} H_1 : S \sim h^{\otimes n},
\end{align}
where $m^{\otimes n} :=m \otimes m \otimes \cdots \otimes m$ ($n$ times) denotes the product distribution, as does $h^{\otimes n}$.
This is one of the key observations that focus on the correct hypothesis-testing framework with multiple samples. Similar to before (cf. \ref{ROC_upper}), based on Le Cam's lemma, it holds that now $1 - \texttt{TV} (m^{\otimes n}, h^{\otimes n})$ gives the minimum Type-I and Type-II error rate, which implies 
\begin{align}\label{TPR_upper_bound}
       \text{TPR}_{\gamma}^n \leq \min\{\text{FPR}_{\gamma}^n + \texttt{TV}(m^{\otimes n}, h^{\otimes n}) ,1\},
\end{align}
where
\begin{align}
    \text{TPR}_{\gamma}^n= & \mathbb{P}_{S\sim m^{\otimes n}} [D(S)\geq \gamma] = \int \mathbb{I}_{\{D(S)\geq \gamma\}} \cdot m^{\otimes n}(S) \cdot dS, \label{TPR_definition_2}
    \\
    \text{FPR}_{\gamma}^n= & \mathbb{P}_{S\sim h^{\otimes n}} [D(S)\geq \gamma] = \int \mathbb{I}_{\{D(S)\geq \gamma\}} \cdot h^{\otimes n}(S) \cdot dS.\label{FPR_definition_2}
\end{align}
We emphasize that the term $\texttt{TV}(m^{\otimes n}, h^{\otimes n})$ is an increasing sequence in $n$ and eventually converges to $1$ as $n \to \infty$. Due to the data processing inequality, it holds that $\texttt{TV}(m^{\otimes k}, h^{\otimes k}) \leq \texttt{TV}(m^{\otimes n}, h^{\otimes n})$ when $k \leq n$ and naturally leads to  $\texttt{TV}(m, h) \leq  \texttt{TV}(m^{\otimes n}, h^{\otimes n})$. This is a crucial observation, showing that even if the machine and human distributions were close in the sentence space, by collecting more sentences, it is possible to inflate the total variation norm to make the detection possible. 

Now, from the large deviation theory, we can show that the rate at which total variation distance approaches $1$ is exponential with the number of samples \citep[Chapter 7]{polyanskiy2022information}, 
\begin{align}\label{TV_cher}
    \texttt{TV}(m^{\otimes n}, h^{\otimes n}) = 1 -\exp\left(-n I_{c} (m, h) + o(n)\right),
\end{align}
where, $I_{c} (m, h)$ is known as the \textit{Chernoff information} and is given by $I_{c} (m, h) = - \log \inf_{0 \leq \alpha \leq 1} \int m^{\alpha}(s)h^{1 - \alpha}(s) ds$.  The above expressions lead to Proposition  \ref{firsttheorem} next. 

\begin{prop}[\textbf{Area Under ROC Curve}]\label{firsttheorem}
 For any detector $D$, with a given collection of i.i.d. samples $S:=\{s_i\}_{i=1}^n$ either from human $h(s)$ or machine $m(s)$, it holds that
 \begin{align}\label{main_equation}
    \texttt{AUROC} \leq \frac{1}{2} + \texttt{TV} (m^{\otimes n}, h^{\otimes n}) - \frac{\texttt{TV} (m^{\otimes n}, h^{\otimes n})^2}{2},
\end{align}
where $\texttt{TV}(m^{\otimes n}, h^{\otimes n}) := 1 -\exp\left(-n I_{c} (m, h) + o(n)\right)$ and $I_{c} (m, h)$ is the Chernoff information.
Therefore,  
{the upper bound of \texttt{AUROC} increases exponentially with respect to the number of samples $n$.} 
 \end{prop}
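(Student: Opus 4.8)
The plan is to reduce the statement to the single--sample \texttt{AUROC} bound of \citep{sadasivan2023can}, now applied to the product distributions $m^{\otimes n}$ and $h^{\otimes n}$, and then to substitute the large--deviation expansion of $\texttt{TV}(m^{\otimes n},h^{\otimes n})$ recorded in \eqref{TV_cher}. First I would fix a detector $D$ acting on the sample set $S$, and recall that its ROC curve is the set of points $\{(\text{FPR}^n_{\gamma},\text{TPR}^n_{\gamma}):\gamma\in\mathbb{R}\}$ together with the interpolating segments obtained by threshold randomization, and that $\texttt{AUROC}$ is the area under this (non-decreasing, concave) boundary, i.e. $\texttt{AUROC}=\int_0^1 \text{TPR}^n(u)\,du$ with $u=\text{FPR}^n$; equivalently $\texttt{AUROC}=\mathbb{P}(D(S_m)>D(S_h))+\tfrac12\mathbb{P}(D(S_m)=D(S_h))$ for independent $S_m\sim m^{\otimes n}$ and $S_h\sim h^{\otimes n}$.

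Next I would invoke the product--distribution form of Le Cam's lemma, which is exactly the ROC upper bound \eqref{TPR_upper_bound}: for every threshold $\gamma$ we have $\text{TPR}^n_{\gamma}\le\min\{\text{FPR}^n_{\gamma}+\texttt{TV}(m^{\otimes n},h^{\otimes n}),1\}$. Writing $t:=\texttt{TV}(m^{\otimes n},h^{\otimes n})\in[0,1]$ and $\phi_t(u):=\min\{u+t,1\}$, this says every point of the ROC curve lies on or below the graph of the concave function $\phi_t$; because $\phi_t$ is concave, the concave closure of the ROC region also stays below it, so $\text{TPR}^n(u)\le\phi_t(u)$ pointwise. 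Integrating, $\texttt{AUROC}\le\int_0^1\phi_t(u)\,du=\int_0^{1-t}(u+t)\,du+\int_{1-t}^{1}1\,du=\tfrac12+t-\tfrac{t^2}{2}$, which is precisely \eqref{main_equation} after substituting back $t=\texttt{TV}(m^{\otimes n},h^{\otimes n})$.

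To obtain the exponential growth claim I would combine this with \eqref{TV_cher}. The map $g(t)=\tfrac12+t-\tfrac{t^2}{2}$ satisfies $g'(t)=1-t\ge 0$ on $[0,1]$, so it is nondecreasing with $g(1)=1$, and by the data--processing inequality $\texttt{TV}(m^{\otimes n},h^{\otimes n})$ is nondecreasing in $n$; plugging $t=1-\exp(-nI_{c}(m,h)+o(n))$ into $g$ gives $\texttt{AUROC}\le 1-\tfrac12\exp(-2nI_{c}(m,h)+o(n))$, so whenever $m\neq h$ (equivalently $I_{c}(m,h)>0$) the bound increases to $1$ with its distance from $1$ decaying exponentially in $n$, as asserted. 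The Chernoff--information asymptotics themselves are quoted from \citep[Chapter 7]{polyanskiy2022information}, so I would not reprove them.

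I expect the only delicate step to be the passage from the pointwise bound $\text{TPR}^n_{\gamma}\le\phi_t(\text{FPR}^n_{\gamma})$ to the integral bound on $\texttt{AUROC}$: one must be careful that the raw set of achievable $(\text{FPR},\text{TPR})$ pairs need not be the graph of a function and that $\texttt{AUROC}$ is computed from its concave closure, so the argument requires observing that a concave function dominating every achievable point also dominates their concave hull. This is immediate for $\phi_t$ since it is itself concave, but it is the one place where a little care is needed; everything else mirrors the computation in \citep{sadasivan2023can} with $\texttt{TV}(m,h)$ replaced throughout by $\texttt{TV}(m^{\otimes n},h^{\otimes n})$.
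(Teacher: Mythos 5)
Your proposal is correct and follows essentially the same route the paper takes: integrate the Le Cam/Neyman--Pearson ROC upper bound $\text{TPR}^n_{\gamma}\le\min\{\text{FPR}^n_{\gamma}+\texttt{TV}(m^{\otimes n},h^{\otimes n}),1\}$ over $\text{FPR}^n_{\gamma}\in[0,1]$ to obtain $\tfrac12+t-\tfrac{t^2}{2}$, then substitute the Chernoff-information asymptotics for $\texttt{TV}(m^{\otimes n},h^{\otimes n})$. Your extra care about passing from the pointwise bound to the concave ROC envelope is a welcome sharpening that the paper leaves implicit, but it does not change the argument.
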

The proof of the above proposition follows by integrating the {{$\text{TPR}^n_{\gamma}$}} upper bound in \eqref{TPR_upper_bound} over {$\text{FPR}^n_{\gamma}$}. We note that the expression in \eqref{main_equation} and the equality of \texttt{TV} distance in terms of Chernoff information presents an interesting connection between the number of samples and \texttt{AUROC} of the best possible detector (which archives the bound in \eqref{TPR_upper_bound} with equality).  It is evident that if we increase the number of samples, $n \to \infty$, the total variation distance $\texttt{TV}(m^{\otimes n}, h^{\otimes n})$ approaches $1$ and that too exponentially fast, and hence increasing the \texttt{AUROC}. This indicates that as long as the two distributions are not exactly the same, which is rarely the same, the detection will always be possible by collecting more samples as established next.

\subsection{Attainability of the AUROC Upper-Bound via Likelihood-Ratio-Based Detectors}\label{attainability}

\paragraph{Likelihood-ratio-based Detector.} Here, we discuss the attainability of bounds in Proposition \ref{firsttheorem} to establish that the bound is indeed tight. We note that it is a well-established fact in the literature that a likelihood-ratio-based detector would attain the bound for any distributions $h$ and $m$ and hence is the best possible detector (detailed proof provided in Appendix \ref{proof_lecam}). We discuss the likelihood-ratio-based detector here for completeness in the context of LLMs as follows.  Specifically, the likelihood ratio-based detector is given by
\begin{align}\label{LR_detector}
    D^*(S):= 
    \begin{cases} \text{Text from machine}  &   \text{if} \ 
    {m^{\otimes n}(S)} \ge {h^{\otimes n}(S)},
    \\
                  \text{Text from human}  &  \text{if} \ 
                  {m^{\otimes n}(S)} < {h^{\otimes n}(S)}.
    \end{cases} 
\end{align}
We proved in Appendix \ref{proof_lecam} that the detector in \eqref{LR_detector} attains the bound and is the best possible detector. 

\paragraph{Sample Complexity of Best Possible Detector.} 
To further emphasize the dependence on the number of samples $n$, we derive the sample complexity bound of AI-generated text detection in Theorem \ref{sample_complexity} as follows. 

\begin{theorem}[\textbf{Sample Complexity of AI-generated Text Detection (Possibility Result)}]\label{sample_complexity}
If human and machine distributions are close $\texttt{TV}(m,h)=\delta>0$, then to achieve an \texttt{AUROC} of $\epsilon$, we require
\begin{align}
    n=\Omega\left(\frac{1}{\delta^2}\log\left(\frac{1}{1  -   \epsilon}\right)\right)
\end{align}
number of samples for the best possible detector which is likelihood-ratio-based as mentioned in \eqref{LR_detector}, for any $\epsilon\in[0.5,1)$.
{Therefore, AI-generated text detection is possible for any $\delta>0$.}
\end{theorem}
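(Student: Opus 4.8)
The plan is to invert the \texttt{AUROC} upper bound from Proposition~\ref{firsttheorem} together with the Chernoff-information characterization of $\texttt{TV}(m^{\otimes n}, h^{\otimes n})$, and to turn the asymptotic statement into a sample-complexity bound. First I would start from \eqref{main_equation}, writing $T_n := \texttt{TV}(m^{\otimes n}, h^{\otimes n})$, so that the bound reads $\texttt{AUROC} \leq \tfrac{1}{2} + T_n - \tfrac{T_n^2}{2} = 1 - \tfrac{1}{2}(1 - T_n)^2$. Since this bound is attained with equality by the likelihood-ratio detector $D^*$ of \eqref{LR_detector} (as established in Appendix~\ref{proof_lecam} via the Neyman--Pearson and Le Cam lemmas), demanding that the best possible detector achieve $\texttt{AUROC} = \epsilon$ is equivalent to requiring $1 - \tfrac{1}{2}(1 - T_n)^2 \geq \epsilon$, i.e. $(1 - T_n)^2 \leq 2(1 - \epsilon)$, i.e. $1 - T_n \leq \sqrt{2(1-\epsilon)}$.

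Next I would convert this into a condition on $n$ using the relation $1 - T_n = \exp(-n I_c(m,h) + o(n))$ from \eqref{TV_cher}, which gives $-n I_c(m,h) + o(n) \leq \tfrac{1}{2}\log(2(1-\epsilon))$, hence (absorbing constants and the sublinear term into the $\Omega(\cdot)$)
\begin{align}
    n = \Omega\!\left(\frac{1}{I_c(m,h)}\log\!\left(\frac{1}{1-\epsilon}\right)\right).
\end{align}
The final step is to relate the Chernoff information $I_c(m,h)$ to the total variation $\delta = \texttt{TV}(m,h)$. Here I would use a standard bound showing that when the two distributions are close, the Chernoff information is controlled from above by a quantity of order $\delta^2$; concretely, taking $\alpha = \tfrac{1}{2}$ in the infimum defining $I_c$ gives $I_c(m,h) \leq -\log \int \sqrt{m(s)h(s)}\,ds = -\log(1 - H^2(m,h)/2) $ in Hellinger terms, and combining with the inequality $H^2(m,h) \leq 2\,\texttt{TV}(m,h)$ (or more precisely $H^2 \le 2\delta$ and $-\log(1-x) \le x/(1-x)$ for small $x$) yields $I_c(m,h) = O(\delta^2)$. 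Substituting $1/I_c(m,h) = \Omega(1/\delta^2)$ into the displayed bound gives exactly $n = \Omega\!\left(\tfrac{1}{\delta^2}\log\tfrac{1}{1-\epsilon}\right)$, and since this is finite for every $\delta > 0$ and every $\epsilon \in [0.5, 1)$, detection is always achievable with enough samples.

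The main obstacle I anticipate is the last step: making the passage from $I_c(m,h)$ to $\delta^2$ fully rigorous and in the right direction. The sample-complexity \emph{upper} bound on $n$ requires a \emph{lower} bound on $I_c(m,h)$ (a larger Chernoff information means fewer samples suffice), whereas the $\alpha = 1/2$ trick naturally gives an \emph{upper} bound on $I_c$; to get an honest $\Omega(1/\delta^2)$ scaling one needs $I_c(m,h) \geq c\,\delta^2$ for some constant, which is not true in complete generality (e.g. distributions differing only on a set of small mass can have $I_c$ much smaller than $\delta^2$), so the theorem is really using $\delta^2$ as the governing scale in a regime where $I_c \asymp \delta^2$ — essentially the Bhattacharyya/Hellinger-vs-TV comparison being two-sided when the densities are comparable. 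I would either state the bound with $I_c(m,h)$ in place of $\delta^2$ and then remark that $I_c \asymp \delta^2$ under the standard closeness assumptions, or invoke the known two-sided relation $\tfrac{1}{8}\big(1-\sqrt{1-\delta^2}\big) \lesssim $ (Chernoff-type quantities) $\lesssim \delta$ and track which direction is needed; the rest of the argument is routine algebraic inversion.
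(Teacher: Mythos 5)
Your route --- inverting the Chernoff-information asymptotics $1-T_n = e^{-nI_c + o(n)}$ from \eqref{TV_cher} and then relating $I_c(m,h)$ to $\delta^2$ --- is genuinely different from what the paper does, and it has one repairable error and one real obstacle. The paper's proof does not go through $I_c$ at all. It instead uses a two-point reduction: pick a witness set $A$ achieving the TV gap, so that $\mathbb{P}_{s\sim m}(s\in A) - \mathbb{P}_{s\sim h}(s\in A) = \delta$, then apply a Hoeffding/Chernoff concentration inequality to the empirical count of samples falling in $A$ to get the explicit finite-sample lower bound $\texttt{TV}(m^{\otimes n}, h^{\otimes n}) \geq 1 - 2e^{-n\delta^2/2}$. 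Plugging this into the AUROC expression and solving yields $n \geq \delta^{-2}\log(2/(1-\epsilon))$ directly. What this buys, compared to your route, is that it is entirely non-asymptotic: your use of \eqref{TV_cher} carries an $o(n)$ remainder that you never discharge, so even if the $I_c$-to-$\delta^2$ passage were fixed you would still not have a finite-$n$ statement, only an asymptotic rate.

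On the $I_c$-to-$\delta^2$ step itself, two things are off in your last paragraph, one of which actually works in your favor. First, the inequality you wrote is reversed: since $\alpha=\tfrac12$ is a particular point in $\inf_{0\le\alpha\le1}\int m^\alpha h^{1-\alpha}\,ds$, you get $\inf_\alpha(\cdot)\le\int\sqrt{mh}$ and hence $I_c = -\log\inf_\alpha(\cdot) \ge -\log\int\sqrt{mh}$ --- a \emph{lower} bound on $I_c$, which is exactly the direction you say you need. Second, your worry that $I_c \gtrsim \delta^2$ ``is not true in complete generality'' is unfounded: with $H^2 := 1 - \int\sqrt{mh}$, the standard inequality $\texttt{TV} \le H\sqrt{2-H^2} \le \sqrt{2}\,H$ gives $H^2 \ge \delta^2/2$, and then $I_c \ge -\log(1-H^2) \ge H^2 \ge \delta^2/2$ for all pairs $(m,h)$, with no closeness or density-comparability assumption. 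Your proposed counterexample (distributions differing only on a small-mass set) does not violate this. So the lower bound you need is free; the genuine gap in your proposal is the unaddressed $o(n)$ term, and the paper's witness-set argument is precisely what lets it avoid both the Chernoff-information comparison and the asymptotics in one stroke.
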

The proof of Theorem \ref{sample_complexity} is provided in Appendix \ref{sec:proof_sample_complexity}. From the statement of  Theorem \ref{sample_complexity}, it is clear that, as long as $\delta>0$ (which means no matter how close human $h(s)$ and $m(s)$ distributions are) and $\epsilon<1$, there exists $n$ such  that we can achieve high \texttt{AUROC} and perform the detection. Here, $n$ corresponds to the number of sentences generated by either humans or machines which we need to detect. We provide additional detailed remarks and insights in Appendix \ref{sec:remarks}.

\subsection{Extension to Non-IID case} 
We extend the sample complexity results of Theorem \ref{sample_complexity} to the non-iid setting in this subsection. In order to accomplish that, we make certain assumptions about the structures present in the input, which is a well-founded assumption that proves to be practical and applicable in the context of various natural language tasks (for ex: present of topics in documents \citep{topic1, topic2}). Let us denote the strength of the association with $\rho$ to characterize the dependence between the sequences $s_i$ and the dependence is given as 
\begin{align} \label{assumption_noniid}
    \mathbb{E}[S_i | S_{i-1} = s_{i-1}, \cdots, S_1 = s_1] = \rho \frac{\sum_{k = 1}^{i-1} s_k}{i-1}  + (1- \rho)\mathbb{E}[S_i],
\end{align}
which boils down to the iid case for $\rho = 0$. An increasing $\rho$ indicates increasing dependence on the previous sequence with $\rho =1$ means that the conditional expectation can be completely expressed in terms of the previous samples in the sequence. The dependence assumption of \eqref{assumption_noniid} embodies a natural intuition for the domain of natural language and serves as a foundation for extending our results to non-iid scenarios. {Eq. \eqref{assumption_noniid} provides a way to measure the dependence between random variables, which is later used to extend Chernoff bound to non-iid cases. . In the context of LLMs, one can think of the sum $\sum {s_k}$ as the "average meaning" of these text samples, such as "woman" + "royalty" may have a similar meaning as "queen".}

Before introducing the final result, let us assume the number of sequences or samples is denoted by $n$, there are $L$ independent subsets, and the corresponding subset is represented by $\tau_j$ where $j \in (1,2 \cdots, L)$, where $\tau_j$ consists of $c_j$ samples (dependent). This is a natural assumption in NLP where a large paragraph often consists of multiple topics, and sentences for each topic are dependent. 
With the above definitions, we state the main result in Theorem \ref{sample_complexity_noniid} for the non-iid setting.
\begin{theorem}[\textbf{Sample Complexity of AI-generated Text Detection (non-iid)}]\label{sample_complexity_noniid}
	If human and machine distributions are close $\texttt{TV}(m,h)=\delta>0$, then to achieve an \texttt{AUROC} of $\epsilon$, we require
	\begin{align}
		n=\Omega\left(\frac{1}{\delta^2} \log\left(\frac{1}{1  -   \epsilon}\right)
	+ \frac{1}{\delta} \sum_{j =1}^{L}(c_j -1)\rho_j
	+  \sqrt{\frac{1}{\delta^2}\log\left(\frac{1}{1  -   \epsilon}\right)\cdot\frac{1}{\delta}\left(\sum_{j =1}^{L}(c_j -1)\rho_j\right) }\right)
	\end{align}
	number of samples for the best possible detector which is likelihood-ratio-based as mentioned in \eqref{LR_detector}, for any $\epsilon\in[0.5,1)$.
	{Therefore, AI-generated text detection is possible for any $\delta>0$.}
\end{theorem}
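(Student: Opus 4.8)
The plan is to reduce the non-IID problem to the IID-style analysis of Theorem~\ref{sample_complexity} by controlling the total variation distance $\texttt{TV}(m^{\otimes}, h^{\otimes})$ between the \emph{dependent} joint distributions in terms of the per-sample $\texttt{TV}(m,h)=\delta$ plus a correction term coming from the dependence parameters $\rho_j$. First I would recall from Proposition~\ref{firsttheorem} that achieving $\texttt{AUROC}\ge\epsilon$ is implied by $\texttt{TV}(\text{joint}_m,\text{joint}_h)\ge 1-\sqrt{2(1-\epsilon)}$, so it suffices to lower bound this joint TV. I would then exploit the block structure: since the $L$ subsets $\tau_1,\dots,\tau_L$ are independent of one another, the joint distribution factorizes across blocks, and by the data-processing / tensorization behavior of TV (more precisely, working through Chernoff information, which is additive over independent blocks), the effective exponent is $\sum_{j=1}^L I_c^{(j)}$ where $I_c^{(j)}$ is the Chernoff information of the $j$-th dependent block of size $c_j$.

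The key step is to show that a dependent block of size $c_j$ with association strength $\rho_j$ contributes a Chernoff information of order $\delta^2\big(c_j - (c_j-1)\rho_j\big)$ — that is, the ``effective'' number of independent samples in block $j$ is roughly $c_j-(c_j-1)\rho_j$, which interpolates between $c_j$ (at $\rho_j=0$) and $1$ (at $\rho_j=1$). To get this I would use the conditional-mean specification in \eqref{assumption_noniid}: it lets one write each $S_i$ within a block as a convex combination of a ``fresh'' fluctuation and the running average of earlier samples, so the log-moment-generating function (equivalently the Rényi/Chernoff quantity $-\log\int m^\alpha h^{1-\alpha}$ for the block) picks up only a $(1-\rho_j)$-fraction of the naive additive contribution per extra sample. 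Summing over blocks, $\sum_j I_c^{(j)} \gtrsim \delta^2\sum_j\big(c_j-(c_j-1)\rho_j\big)$, and since the total sample count is $n=\sum_j c_j$, the exponent equals $\delta^2\big(n - \sum_j (c_j-1)\rho_j\big)$.

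Finally I would invert the bound: requiring $\exp\!\big(-(\text{exponent})\big)\le \sqrt{2(1-\epsilon)}$ gives $\delta^2\big(n-\sum_j(c_j-1)\rho_j\big)\gtrsim \log\frac{1}{1-\epsilon}$, i.e. $n \gtrsim \frac{1}{\delta^2}\log\frac{1}{1-\epsilon} + \sum_j(c_j-1)\rho_j$. The stated three-term bound with the cross term $\sqrt{\frac{1}{\delta^2}\log\frac{1}{1-\epsilon}\cdot\frac{1}{\delta}\sum_j(c_j-1)\rho_j}$ then arises from being careful with the $o(n)$ term in \eqref{TV_cher} and from the fact that one actually solves a quadratic-type inequality in $\sqrt{n}$ rather than a linear one in $n$ once the dependence penalty is scaled by $1/\delta$ rather than $1/\delta^2$; completing the square produces exactly those three terms. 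The main obstacle I expect is the middle step: rigorously establishing that the dependence structure \eqref{assumption_noniid} degrades the Chernoff information of a block by precisely the factor $(1-\rho_j)$ per extra sample. This requires either a clean sub-Gaussian/martingale decomposition of the block likelihood ratio or an explicit computation of the block's Rényi divergence under the affine conditional-mean model, and getting the constants and the $o(n)$ remainder to line up with the clean statement is the delicate part; the block-additivity of Chernoff information and the final algebraic inversion are comparatively routine.
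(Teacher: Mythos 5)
There is a genuine gap. You replace the paper's actual argument with a Chernoff-information route that you do not close, and even your heuristic version of that route would not reproduce the stated bound. The paper does not go through Chernoff information in the non-iid case at all: exactly as in the proof of Theorem~\ref{sample_complexity}, it picks a set $A$ on which $m$ and $h$ differ by $\delta$, considers the empirical fraction of the $n$ samples landing in $A$, and applies a Hoeffding-type concentration inequality tailored to the dependence model \eqref{assumption_noniid} (Lemma~\ref{Non-iid}, taken from \citep{dhurandhar2013auto}). That lemma has the form $\mathbb{P}(|\bar S - \mathbb{E}\bar S|\ge \delta) \le 2\exp\!\bigl(-2(n\delta - \alpha)^2/n\bigr)$ with $\alpha := \sum_{j}(c_j-1)\rho_j$: the dependence penalty enters additively against $n\delta$ \emph{inside} a square that is then divided by $n$. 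Pushing this through the AUROC condition produces the quadratic inequality $\delta^2 n^2 - (4\alpha\delta + \gamma(\epsilon))\, n + 4\alpha^2 \ge 0$, and solving that quadratic in $n$ is precisely what generates the $\alpha/\delta$ middle term and the $\sqrt{\gamma(\epsilon)\,\alpha/\delta^3}$ cross term.

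Your heuristic instead posits an effective block Chernoff information $\approx \delta^2\bigl(c_j - (c_j-1)\rho_j\bigr)$, i.e.\ an exponent \emph{linear} in $n$ of the form $\delta^2(n-\alpha)$. Inverting that yields $n \gtrsim \frac{1}{\delta^2}\log\frac{1}{1-\epsilon} + \alpha$: only two terms, with the wrong $\delta$-scaling on the penalty ($\alpha$ rather than $\alpha/\delta$), and no cross term at all. You notice the mismatch and attribute the missing pieces to the $o(n)$ slack in \eqref{TV_cher} and a ``quadratic in $\sqrt n$,'' but give no mechanism, and in fact those terms have nothing to do with $o(n)$ --- they are artifacts of the $(n\delta-\alpha)^2/n$ exponent of Lemma~\ref{Non-iid}, which your linear exponent discards. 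You also flag (correctly) that your key block-level claim about how $\rho_j$ degrades the Chernoff information is unproved; in the paper that claim is neither needed nor established. The repair is to abandon the Chernoff-information route and, as in the iid proof, apply the dependence-aware concentration bound directly to the empirical count in $A$, then solve the resulting quadratic in $n$.
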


The proof is provided in Appendix \ref{sample_complexity_noniid}. From the statement of  Theorem \ref{sample_complexity_noniid},  we note that for $\delta>0$ ($h(s)$ and $m(s)$ are close but not exactly the same) and $\epsilon<1$, there exists $n$ such  that we can achieve high \texttt{AUROC} and perform the detection. In comparison to the iid result in Theorem \ref{sample_complexity}, the non-iid result in Theorem \ref{sample_complexity_noniid} has an additional term that depends on $c_j$ and $\rho_j$. Clearly, for $\rho_j=0$, the sample complexity result in Theorem \ref{sample_complexity_noniid} boils down to the result in Theorem \ref{sample_complexity}. 

 \section{Experimental Studies}\label{simulations}

In this section, we provide detailed empirical evidence to support our detectability claims of this work. We consider various human-machine generated datasets and general language classification datasets. 

%
\textbf{\texttt{AUROC} Discussion and Comparisons:} We first try to explain the meaning of the mathematical results we obtain via simulations.  For instance, we show a pictorial representation of \texttt{AUROC} bound we obtained in Proposition \ref{firsttheorem} and compare it against the ROC upper bound we mentioned in \eqref{TPR_upper_bound} for different values of $n$. In Figure \ref{fig:main_figure_intuition}, we show that even if the original distributions of human $h(s)$ and machines $m(s)$ are close in TV norm $\texttt{TV}=0.1$, we can increase the ROC area (and hence \texttt{AUROC}) via increasing the number of samples we collect $n$ to perform the detection. 
\begin{figure}[t]
     \centering
     \begin{subfigure}[b]{0.32\textwidth}
         \centering
    \includegraphics[width=\textwidth]{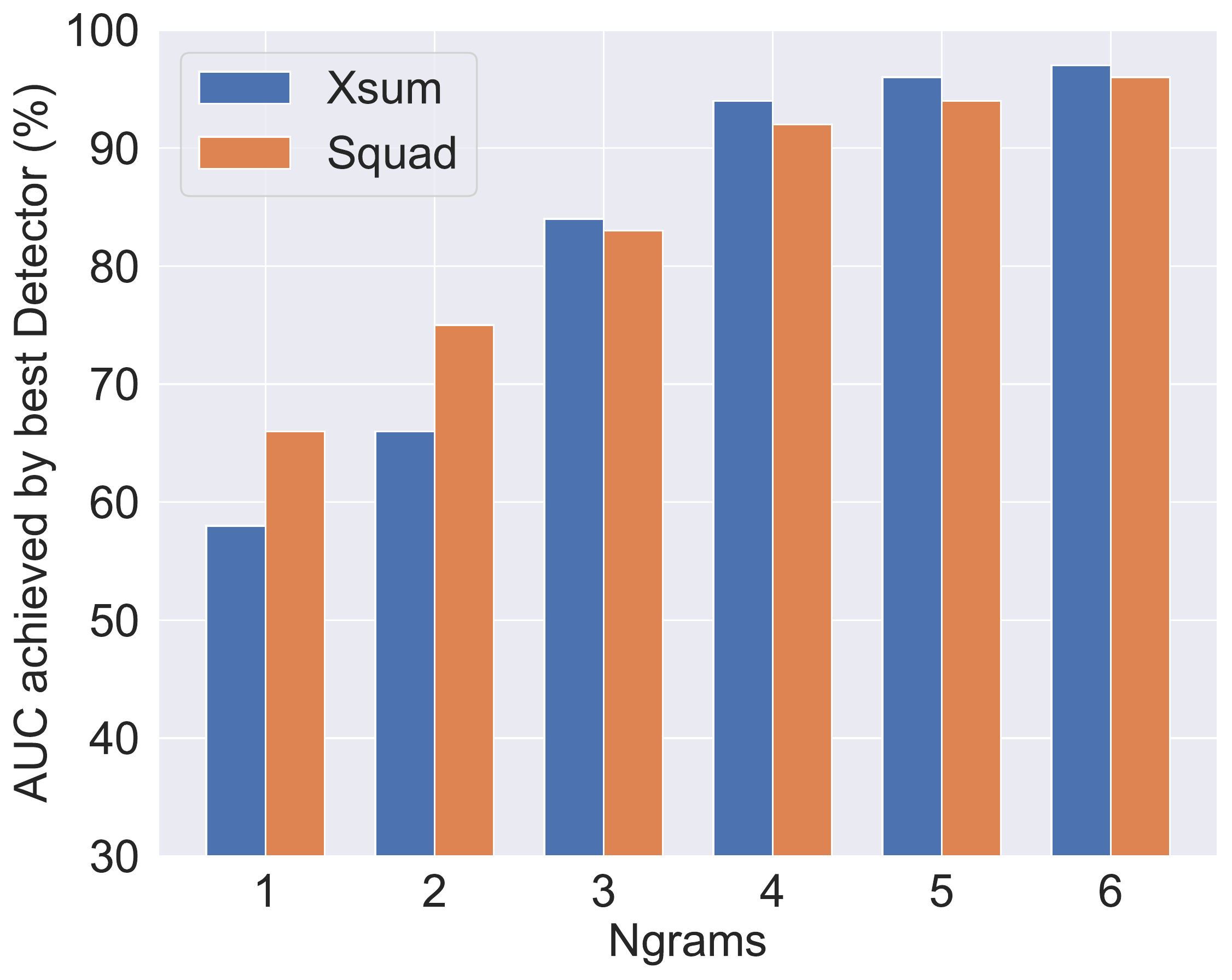}
         \caption{}
         \label{fig:ngram}
     \end{subfigure}
          \hfill
     \begin{subfigure}[b]{0.32\textwidth}
         \centering
    \includegraphics[width=\textwidth]{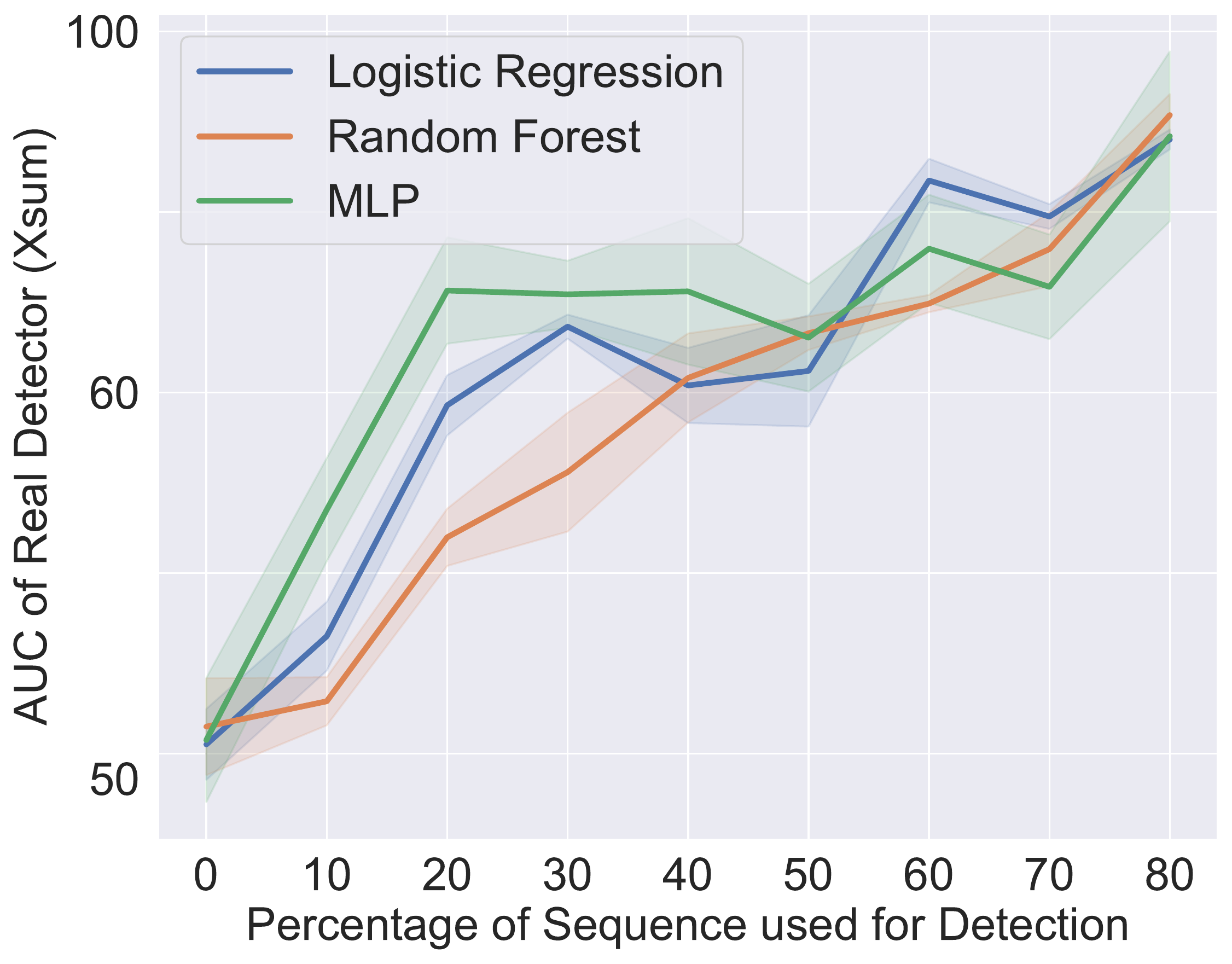}
         \caption{}
         \label{fig:xsum_real}
     \end{subfigure}
     \hfill
          \begin{subfigure}[b]{0.34\textwidth}
         \centering
    \includegraphics[width=\textwidth]{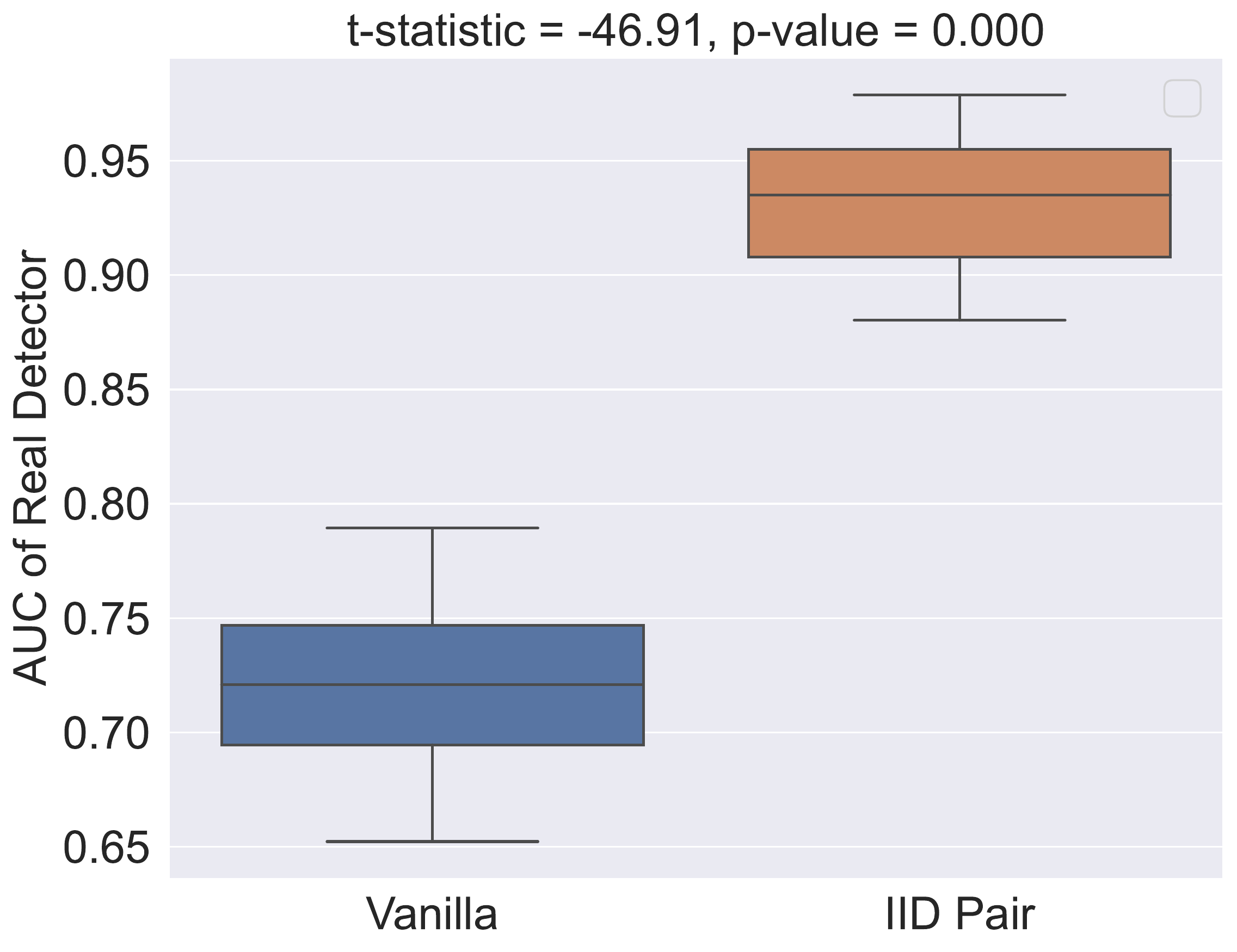}
         \caption{}
         \label{fig:cont}
     \end{subfigure}
        \caption{\textbf{(a)-(c)} validates our theorem for real human-machine classification datasets generated with XSum \citep{xsum_dataset} and Squad \citep{squad_dataset}, showing that with an increase in the number of samples/sequence length, detection performance improves significantly. Figure \ref{fig:ngram} shows that the \texttt{AUROC} achieved by the best possible detector using the equation increases significantly from $58\%$ to $97\%$ with an increase in the Ngrams of the feature space for both Xsum and Squad datasets. Figure \ref{fig:xsum_real} demonstrates the improvement in \texttt{AUROC} with respect to sequence length using various real detectors/classifiers. Figure \ref{fig:cont} shows using a box-plot-based comparison that if we consider $2$ iid sequences (from either machine/human) to detect instead of one, the \texttt{AUROC} of the real detector improves drastically from $73\%$ to $97\%$, hence validating our hypothesis. 
        }
        \label{fig:plot1}
\end{figure}

\subsection{Real Data Experiments} 
In this section, we perform a detailed experimental analysis and ablation to validate our theorem with several real human-machine generated datasets as well as general natural language datasets. \\

\noindent\textbf{Datasets, AI-Text Generators and Detectors Description:} Our experimental analysis spans across 4 critical datasets, including the news articles from XSum dataset \citep{xsum_dataset}, Wikipedia paragraphs from Squad dataset \citep{squad_dataset}, IMDb reviews \citep{maas2011learning}, and Kaggle FakeNews dataset \citep{fakenews}, utilizing the datasets in a diverse manner to validate our hypothesis. The first two datasets (XSum and Squad) have been leveraged to generate machine-generated text by prompting an LLM with the first $50$ tokens of each article in the dataset, sampling from the conditional distribution of the LLMs, as followed in \citep{mitchell2023detectgpt, krishna2023paraphrasing, sadasivan2023can}. Specifically, we use a diverse set of SOTA open-source text generators including {GPT-2, GPT-3.5-Turbo, Llama, Llama-2-13B-Chat-HF, and Llama-2-70B-Chat-HF as the LLM for generating the machine-generated text using the token prompts as described above. We consider $500$ passages from both the Xsum and Squad datasets and subsequently $500$ machine-generated texts corresponding to them using GPT-2 and evaluate the detection performance in 3 broad categories including \textit{(1) supervised detection, (2) contrastive with i.i.d. samples,  and (3) zero-shot performance}. Finally, we leverage two additional general language datasets (detailed in Appendix \ref{additional_experiments}), IMDb and Kaggle FakeNews, to give more insights into the separability and detection performance with an increasing number of samples.\\
\begin{figure}[t]
    \centering
    \includegraphics[width=\textwidth]{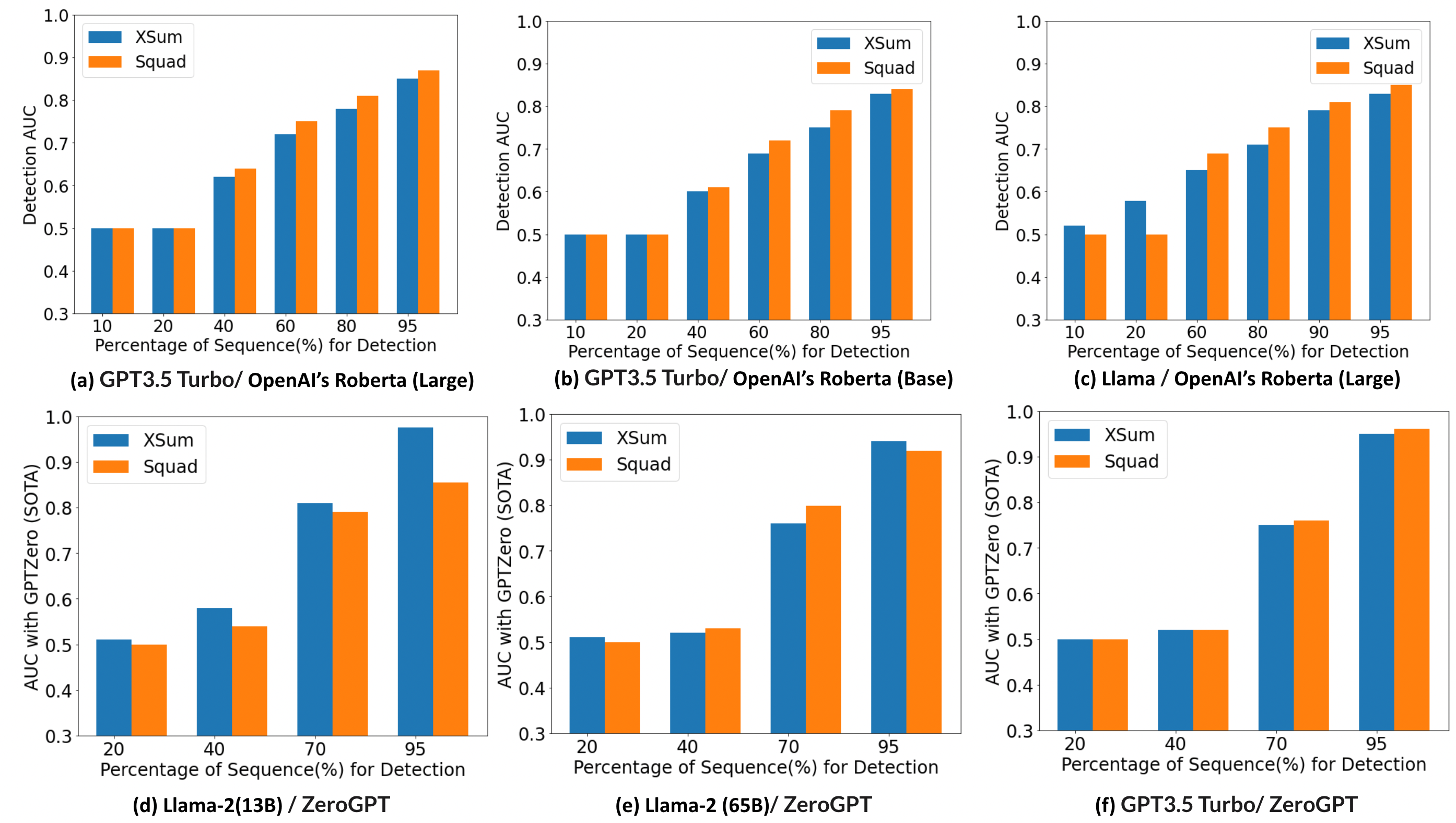}
        \caption{{\textbf{(a)-(f)} validates our theorem for real human-machine classification datasets generated with XSum \& Squad, with zero-shot detection performance. We use different generator/detector pairs to show the performance comparisons. For instance, (a) shows the detection performance (AUROC) of OpenAI’s Roberta detector (Large) on the text generated by GPT3.5 Turbo, and we extend it to other pairs in (b)-(f). We observe that with the increase in the number of samples or sequence length for detection, the zero-shot detection performance from both the models improves  from around 50\% to 90\% for both Xsum and Squad human-machine datasets. We also performed similar experiments with GPT-2 as well and results are available in Figure \ref{additional} in the appendix.}}
    \label{fig:enter-label}
\end{figure}

\noindent\textbf{(1) Supervised detection performance:} To validate our hypothesis from a supervised detection/classification perspective, we first compute the total variation distance between the human and machine-generated texts at various n-gram levels where $\texttt{n-gram} = 1$ indicates the detection is at a word level, and as we increase it, it approaches sentence to paragraph level. We subsequently estimate the \texttt{AUROC} of the best detector using equation \eqref{LR_detector} by increasing the length of the $\texttt{n-gram}$ from 1 to 6 as shown in Figure \ref{fig:ngram}. It is evident that with increasing \texttt{n-grams}, the \texttt{AUROC} of the best detector  increases significantly from $58\%$ to $97\%$ for both Xsum and Squad datasets. This empirical observation completely aligns with our theory and intuition. To further test our hypothesis with real detectors, we train $3$ vanilla classification models including Logistic Regression, Random Forest, and a 2-layer Neural Network with TF-IDF-based feature representation (bag of words) on the human-machine generated datasets including Xsum and Squad. We report the performance of the test \texttt{AUROC} with increasing sequence length in Figure \ref{fig:xsum_real}, which shows a significant increase in accuracy as the sequence length increases even with real detectors. This observation is also supported by the results obtained from Open-AI and summarized in the report \citep{openaireportlen}. This impressive performance aligns with our claims and provides evidence that designing a detector with high performance for AI-generated text is always possible.\\

\noindent\textbf{(2) Detection with pairwise IID Samples:} We also design an experiment where we assume that one can have access to $2$ iid samples (from machine or human) for detection instead of just one example, which is practical and can be easily obtained in several scenarios. For example, consider detecting fake news or propaganda from a Twitter bot. We restructure our training set of the human-machine dataset by constructing pairwise training samples with labels of humans and machines and perform binary classification with only $30\%$ of the enhanced pairwise dataset with very limited bag-of-word based features and Logistic regression, as shown in Figure \ref{fig:cont}. We note that there is a statistically significant boost in detection performance with pairwise samples, even with a vanilla model and sampled dataset, which indicates that detection will be almost always possible in most scenarios where it is indeed crucial. \\

\noindent\textbf{(3) Zero-Shot detection performance:} Next, we substantiate our claims using zero-shot detection performance on the human-machine dataset for both Xsum and Squad demonstrated in Figures \ref{fig:enter-label}(a)-(f). For the zero-shot detection in Figures \ref{fig:enter-label}(a)-(c), we use the RoBERTa-Large-Detector and RoBERTa-Base-Detector from OpenAI, which are trained or fine-tuned for binary classification with datasets containing human and AI-generated texts \citep{AITextDetector2}. {We also perform experiments with another state-of-the-art detector called ZeroGPT~\citep{AITextDetector1} shown in Figures \ref{fig:enter-label}(d)-(f).} We observe that with the increase in the number of samples or sequence length of detection, the zero-shot detection performance of models improves drastically {from around $50\%$ to $90\%$} on both Xsum and Squad human-machine datasets. Naturally, the performance of RoBERTa-Large-Detector is better compared to RoBERTa-Base-Detector, but still, the improvement in \texttt{AUROC} with the number of samples/sequence length is significant with both the models, validating our claims.\\
\begin{wrapfigure}{r}{0.4\textwidth}
  \begin{center}
    \includegraphics[width=0.4\textwidth]{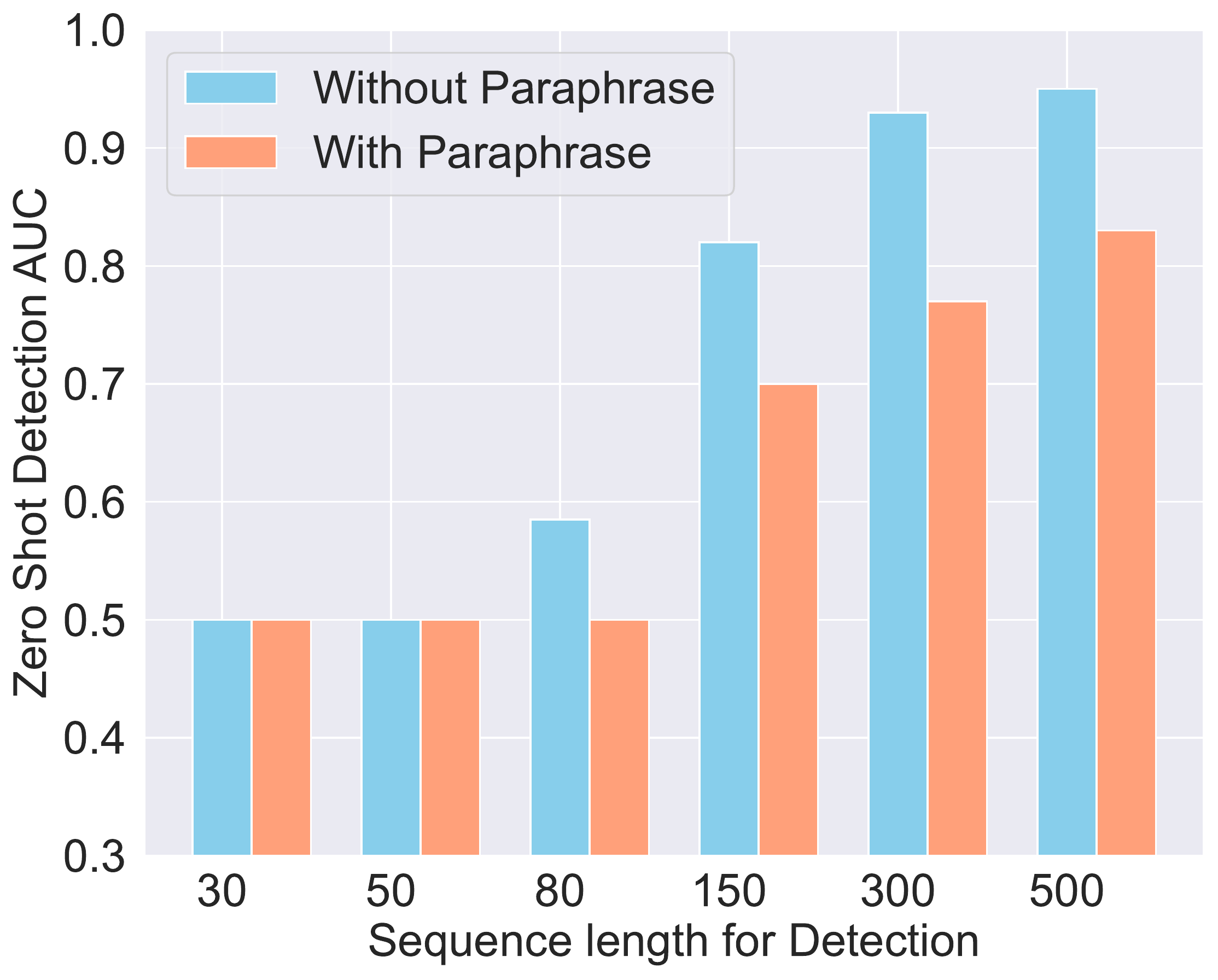}
  \end{center}
    \caption{ This figure demonstrates zero-shot detection performance with and without paraphrasing using RoBERTa-Large-Detector. Although the detection performance drops by approximately 15\% due to paraphrasing, the trend of performance improvement holds as the sequence length increases.}
             \label{fig:roberta_large_para}
             \vspace{-8mm}
\end{wrapfigure}

\noindent\textbf{(4) Detection with Paraphrasing:} We also perform the experiments with paraphrasing the document generated by the machine using a pre-trained Open-sourced HuggingFace Paraphraser \textit{Parrot} \citep{prithivida2021parrot} which allows controlling the  adequacy, fluency, and diversity of the generated text. We perform both supervised (Appendix), with pairwise IID Samples (Appendix) and Zero-shot detection with OpenAI's RoBERTa-Large-Detector. It is evident from Figure \ref{fig:roberta_large_para} that the detection performance decreases with paraphrasing as also shown in \citep{sadasivan2023can, krishna2023paraphrasing}. 

Although the detection performance drops by approximately 15\% due to paraphrasing, the trend of performance improvement still remains prominent as the sequence length increases, which validates our hypothesis even under attack. Hence, one can potentially evade such attacks by considering larger sequence lengths with the sample complexity trade-off. Additionally, we observed that the performance degradation is much lesser with pairwise iid samples, highlighting the possibilities with fine-grained detectors.

 \section{Conclusion}
We note that it becomes harder to detect the AI-generated text when $m(s)$ is close to $h(s)$, and paraphrasing or successive attacks can indeed reduce the detection performance as shown in our experiments. However, in several domains  where we assert that by collecting more samples/sentences it will be possible to increase the attainable area under the receiver operating characteristic curve (\texttt{AUROC}) sufficiently greater than $1/2$, and hence make the detection possible. We further remark that it would be quite difficult to make LLMs exactly equal to human distributions due to the vast diversity within the human population, which may require a large number of samples from an information-theoretic perspective and provide a lower bound on the closeness distance to human distributions. While there are potential risks associated with detectors, such as misidentification and false alarms, we believe that the ideal approach is to strive for more powerful, robust, fair, and better detectors and more robust watermarking techniques. To that end, we are hopeful, based on our results, that text detection is indeed possible under most of the settings and that these detectors could help mitigate the misuse of LLMs and ensure their responsible use in society.

\bibliography{reference}
\bibliographystyle{iclr2024_conference}

\clearpage
 \appendix
\addcontentsline{toc}{section}{Appendix} 
\part{Appendix} 
\parttoc 
 \newpage
\section*{\centering \Large{Appendix}}

\section{Additional Insights and Remarks} \label{sec:remarks}

\textbf{Remark 1:} \textbf{Insights for watermark design.} 
From a practical perspective, even though Theorem \ref{sample_complexity} shows that detection is always possible by collecting more samples, it might be costly as well if the number $n$ needed is extremely high. However, one could mitigate this trade-off by developing efficient watermarking techniques as discussed in  \citep{kirchenbauer2023watermark, aaronson_2022}, which essentially increases the Chernoff information, or in other words, increases the $\delta$, eventually reducing the required number of samples. Nevertheless, empirical demonstrations in \citep{sadasivan2023can, krishna2023paraphrasing} exposed the vulnerability of the 
 watermark-based detectors with paraphrasing-based attacks, raising a genuine concern in the community about the detection of AI-generated texts. 

To address this concern, more recently, interesting work by \citep{krishna2023paraphrasing} proposed a novel defense mechanism based on information retrieval principles to combat prior attacks and demonstrated its effectiveness even with a corpus size of 15M generations. This result also supports our theory, indicating that it is always possible to detect AI-generated text depending on the detection method. In addition, there are some recent open-sourced text detection tools \citep{AITextDetector1, AITextDetector2} whose performances are also worth considering and validate the fact that detection is indeed possible under certain settings. We believe that with the new insights from this work, one can design more efficient and robust watermarks spanning a larger corpus of text, which will be hard to remove via vanilla paraphrasers.\\

\noindent\textbf{Remark 2:} \textbf{Insights for detector design.} 
This work demonstrates that detecting AI-generated text should be almost always possible but one would need to collect more samples depending on the  hardness of the problem (controlled by the closeness of human and machine distributions). The recent study by \citep{liang2023gpt} raises an important concern regarding the bias in some of the existing detectors. The authors  in \citet{liang2023gpt} revealed that a significant proportion of the current detectors inaccurately classify non-native English writing samples as AI-generated, potentially leading to unjust consequences in various contexts.  Interestingly, updating text generated by non-native speakers with prompts such as \textit{Enhance it to sound more like that of a native speaker} leads to a substantial decrease in misclassification. This evidence suggests that most current detectors prioritize low perplexity as a crucial criterion for identifying a text as AI-generated, which might be flawed in various contexts, for example - academic papers as shown in \citep{liang2023gpt}. 
More specifically, we want to highlight the potential for bias in detectors relying primarily on perplexity scores, as elaborated in \citep{liang2023gpt}, underscoring the need for a comprehensive and equitable redesign that takes into account other relevant metrics. Our research demonstrates a promising approach to text detection, wherein the collection of more samples and the development of a multi-sample-based detector significantly enhance performance from the best word-level detector, as demonstrated by our experimental results depicted in Figures \ref{fig:example_dataset}-\ref{fig:second_Experiment_second}. While our results demonstrate the potential for improved detection accuracy at the paragraph level, it is important to note that this approach requires designing detectors capable of processing multiple samples. For instance, in our IMDb example, we developed a paragraph-level detector that can take the entire paragraph as input, in contrast to the word-level detector, which only processes one word at a time. Thus our approach requires the detector to deal with $n$ samples, which may be complicated compared to processing just one sample, leading to a trade-off that could be critical for accurate detection in practice. To summarize, our work offers valuable insights into detector design, specifically about the sample complexity of AI-text detention and its connection to Chernoff information of human and machine distributions. We can utilize these insights to develop robust and fair detectors that enhance the overall accuracy of text detection methods.\\

\noindent \textbf{Remark 3: Task-specific detectability \& optimistic view of LLMs.} 
In addition to our findings on the detectability of LLM-generated content, we want to highlight the significance of task-specific detectability (Figure \ref{fig:example}). While the primary focus of Theorem \ref{sample_complexity} is to detect machine-generated text, it is important to consider the broader context of LLMs and their potential positive applications. LLMs have demonstrated significant potential to assist in a variety of tasks, including language translation \citep{vaswani2017attention}, text summarization \citep{rush2015neural}, dialogue systems \citep{serban2015building}, question answering \citep{qa1, qa2, qa3}, information retrieval \citep{irl1, irl2, irl3}, recommendation engine \citep{reco1, reco2}, language grounded robotics \citep{robotics1} and many others. In these scenarios, the goal is to generate high-quality text that meets the needs of the user rather than to deceive or mislead.
For example, consider the application of an LLM as a tool to assist individuals or groups with moderate English writing skills to improve their writing.  In this case, a well-trained LLM model could have a better (and different) distribution across $\mathcal{S}$ than the human distribution $h(s)$. This difference in distributions ensures that it should be possible to detect that AI generates the text. 
This understanding of detectability underscores the complexity of working with LLMs and emphasizes the importance of tailored approaches to maximize their potential. Our work provides  
insights into the intricacies of LLM-generated content detection, paving the way for more targeted and practical applications of these powerful models.\\
\begin{figure}[t]
	\centering
	\vspace{-6mm}
	\subfloat[\centering LLM learns different distribution.]{{\includegraphics[width=0.47\columnwidth]{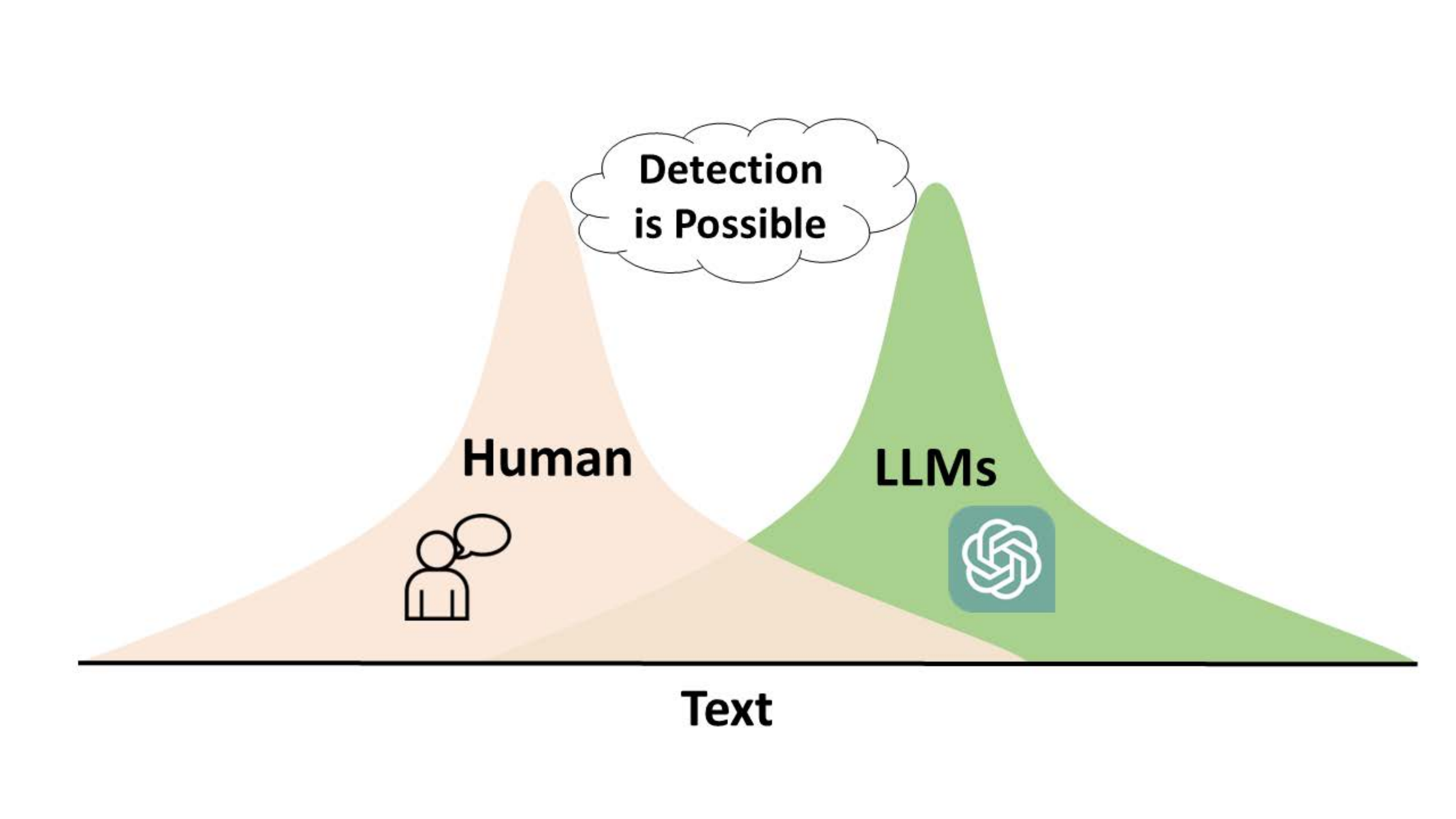} }}%
	\subfloat[\centering LLM learns similar distribution. ]{{\includegraphics[width=0.48\columnwidth]{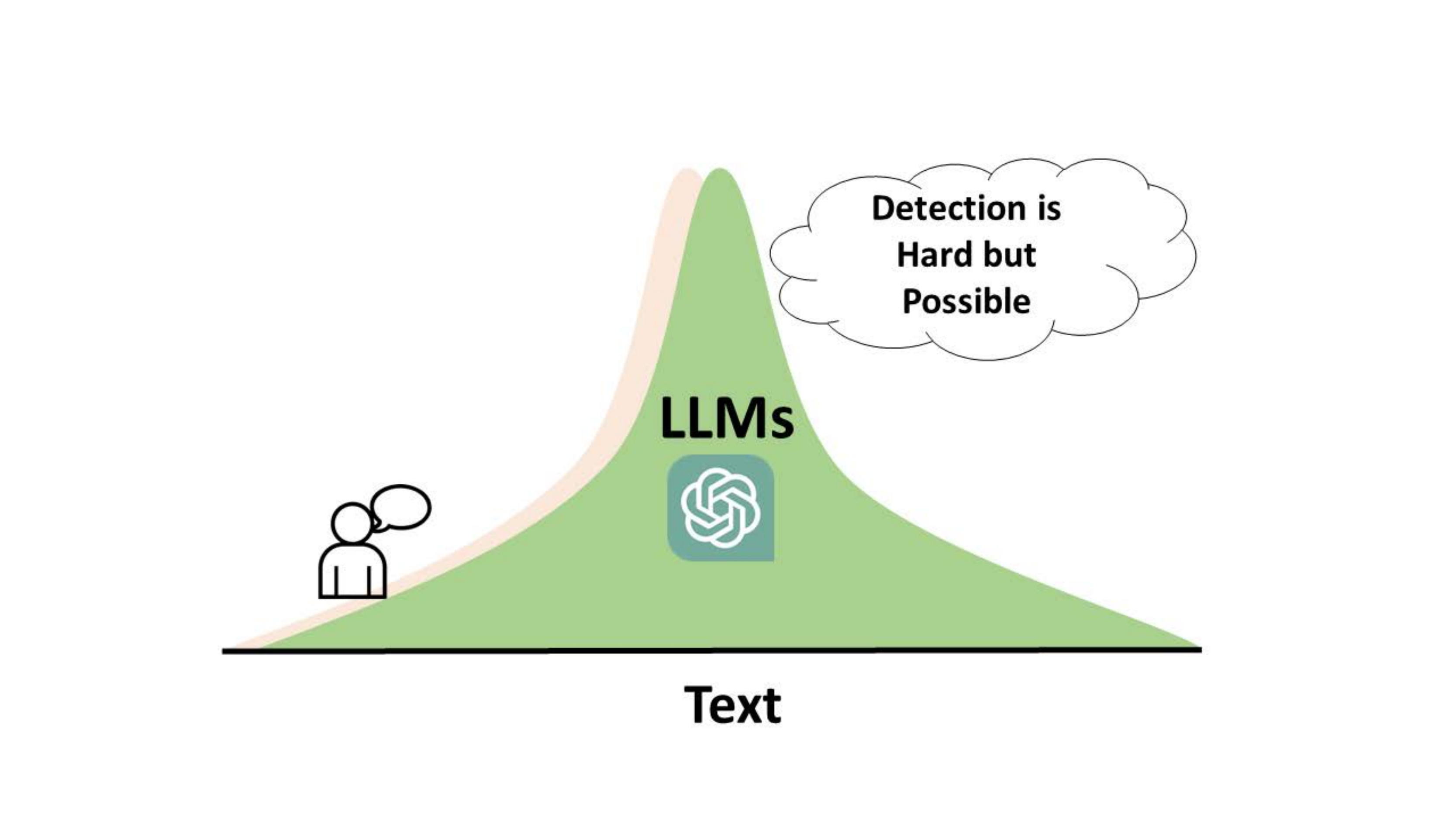} }}%
	\caption{We present the two detectability regimes for LLMs. Figure \ref{fig:example}(a) denotes the scenario in which, when LLMs learn a different distribution, and the detection is easy. Figure \ref{fig:example}(b) shows a scenario when LLMs' distribution is very close to human's, it is hard but possible to detect in this setting via collecting more samples. {Additionally in scenarios of Figure \ref{fig:example}(b), efficient watermarking techniques such as \citep{kirchenbauer2023watermark, krishna2023paraphrasing}} could help in improving the separability and detectability.}%
	\label{fig:example}%
\end{figure}

\noindent \textbf{Remark 4: Realistic scenarios where $m(s)$ and $h(s)$ are different.}
Theorem \ref{sample_complexity} suggests that even small differences between the machine-generated text $m(s)$ and the human-generated text $h(s)$ should help for AI-generated text detection. In many practical applications, this difference can be easily achieved since we can control $m(s)$, but not necessarily $h(s)$.
One such application is the use of LLMs to address biases and prejudices in human-generated text. While biases can arise due to the diverse backgrounds of certain communities or clusters of humans, LLMs can be trained to generate unbiased text by minimizing the likelihood of biased language in the training data. This can lead to a more inclusive and equitable society, where language use is free from discrimination.
Importantly, it is crucial to maintain a gap between the bias in human-generated text and that in machine-generated text. This ensures that biased language remains more likely to originate from human-generated text than from LLMs. By doing so, effective detection and separation of the two sources can be achieved, enabling us to fully harness the potential of LLMs without compromising their integrity.
With careful consideration and responsible use, LLMs can make a positive impact on our society, helping us to communicate more effectively and promoting fairness and inclusivity in language use.

\section{Detailed Proofs} 
\subsection{Revisiting Le Cam's Lemma and the Existence of the Optimal Detector}\label{proof_lecam}
We first restate Le Cam's lemma and its proof, which appears in \citet{le2012asymptotic} and many lecture notes such as \citep{wasserman2013}.

\begin{lemma}[Le Cam's Lemma]\label{lecam_lemma}
Let $\mathcal{S}$ be an arbitrary set. For any two distributions $m$ and $h$ on $\mathcal{S}$, we have
\begin{align}\label{eqn:lecam}
    \inf_\Psi \big\{ \mathbb{P}_{s\sim m} [ \Psi(s) \ne 1] +  
        \mathbb{P}_{s\sim h} [\Psi(s) \ne 0] \big\}
    = 1 - \texttt{TV} (m, h),
\end{align}
where the infimum is taken over all detectors (measurable maps) $\Psi: \mathcal{S} \rightarrow \{1, 0\}$. 
Particularly, the detector with the acceptance region $A^*:=\{ s: m(s) \ge h(s)\}$, defined as
$$ \Psi^*(s):= 
    \begin{cases} 1  &  s \in A^* \\
                  0  &  s \in \mathcal{S} {\setminus} A^*,
    \end{cases} $$
achieves the infimum. We note that $\Psi^*$ is the likelihood ratio-based detector. 
\end{lemma}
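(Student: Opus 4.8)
The plan is to parametrize every detector by its acceptance region and thereby reduce the sum of the two error probabilities to a single set functional. Given a measurable $\Psi : \mathcal{S} \to \{0,1\}$, set $A := \{s : \Psi(s) = 1\}$. Then $\mathbb{P}_{s\sim m}[\Psi(s) \ne 1] = 1 - m(A)$ and $\mathbb{P}_{s\sim h}[\Psi(s) \ne 0] = h(A)$, so the bracketed objective in \eqref{eqn:lecam} equals $1 - \bigl(m(A) - h(A)\bigr)$. Taking the infimum over $\Psi$ is the same as taking the supremum over measurable $A \subseteq \mathcal{S}$, so $\inf_\Psi\{\cdots\} = 1 - \sup_A\bigl(m(A) - h(A)\bigr)$, and the whole lemma reduces to identifying this supremum with $\texttt{TV}(m,h)$ and exhibiting a maximizer.

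Next I would bound the supremum from above. Writing $m,h$ for the densities, for any $A$ we have $m(A) - h(A) = \int_A (m(s) - h(s))\,ds \le \int_A (m(s)-h(s))^{+}\,ds \le \int_{\mathcal{S}} (m(s)-h(s))^{+}\,ds$, and the last quantity is $\texttt{TV}(m,h)$ by the standard identity $\texttt{TV}(m,h) = \tfrac12\int|m-h| = \int(m-h)^{+} = \int(m-h)^{-}$, the two one-sided integrals agreeing because $\int(m-h)=0$. This gives $\sup_A\bigl(m(A)-h(A)\bigr) \le \texttt{TV}(m,h)$.

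For the matching lower bound, and the attainability claim at once, I would evaluate at $A^{*} := \{s : m(s)\ge h(s)\}$. On $A^{*}$ the integrand $m-h$ is nonnegative and off $A^{*}$ it is negative, so $m(A^{*}) - h(A^{*}) = \int_{A^{*}}(m-h)\,ds = \int_{\mathcal{S}}(m-h)^{+}\,ds = \texttt{TV}(m,h)$. Hence the supremum is attained and equals $\texttt{TV}(m,h)$, which shows both the equality in \eqref{eqn:lecam} and that $\Psi^{*}$ achieves the infimum. Finally I would note that $\Psi^{*}$ is precisely the likelihood-ratio rule since $m(s)\ge h(s) \iff m(s)/h(s)\ge 1$, and that applying the identical argument to the product measures $m^{\otimes n}$ and $h^{\otimes n}$ yields the detector $D^{*}$ of \eqref{LR_detector} as optimal, with minimum error sum $1 - \texttt{TV}(m^{\otimes n}, h^{\otimes n})$.

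I do not expect a substantive obstacle; the one point requiring care is which definition of $\texttt{TV}$ is in force. If $\texttt{TV}(m,h)$ is defined as $\sup_A|m(A)-h(A)|$, the identification is immediate once one observes that replacing $A$ by $\mathcal{S}\setminus A$ flips the sign of $m(A)-h(A)$, so $\sup_A\bigl(m(A)-h(A)\bigr) = \sup_A|m(A)-h(A)|$; if $\texttt{TV}$ is taken as the $L^{1}$ distance, one invokes the elementary identity quoted above. Either way the argument is short and self-contained.
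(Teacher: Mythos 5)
Your proof is correct, and its first half --- parametrizing detectors by their acceptance region $A$, rewriting the error sum as $1-(m(A)-h(A))$, and reducing the infimum to $1-\sup_A\big(m(A)-h(A)\big)$ --- is exactly the paper's argument. Where you diverge is in showing that $A^*=\{s: m(s)\ge h(s)\}$ attains the supremum and that the supremum equals $\texttt{TV}(m,h)$: you work at the density level, bounding $m(A)-h(A)\le\int (m-h)^{+}$ for every $A$ and computing $m(A^*)-h(A^*)=\int (m-h)^{+}=\texttt{TV}(m,h)$ via the identity $\tfrac{1}{2}\int|m-h|=\int(m-h)^{+}$, whereas the paper stays at the level of measures: it invokes the supremum-over-sets definition of $\texttt{TV}$ directly and proves optimality of $A^*$ by a set-decomposition comparison, showing $m(A^*)-h(A^*)\ge m(B)-h(B)$ for every measurable $B$ by splitting into $A^*\cap B$, $A^*\setminus B$, $B\setminus A^*$ and using the sign of $m-h$ on each piece. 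Your route is shorter and makes the value of the supremum explicit; the paper's route avoids the $L^1$ identity at the cost of a longer set-algebra argument, and it passes from $\sup_A\big(m(A)-h(A)\big)$ to $\texttt{TV}(m,h)$ without comment --- the complementation remark you add is precisely what justifies that step, so your treatment is slightly more careful on the definitional point. Your closing observation that the same argument applied to $m^{\otimes n}$ and $h^{\otimes n}$ yields the detector of \eqref{LR_detector} as optimal matches how the paper uses the lemma in Section~\ref{attainability}.
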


\begin{proof}
For notation simplicity, we use $m$ and $h$ to denote both the probability measure and the probability density of the machine-generated and human-generated text, respectively, with the specific meaning discernible from the context.
For any detector $\Psi: \mathcal{S} \rightarrow \{1, 0\}$, denote $A$ as its acceptance region, where $\Psi(s)=1$ for $s\in A$, and $\Psi(s)=0$ for $s\in\mathcal{S} {\setminus} A$. 
Then we have
\begin{align}\label{eqn:proof_lemma_1}
     \mathbb{P}_{s\sim m} [ \Psi(s) \ne 1] + \mathbb{P}_{s\sim h} [\Psi(s) \ne 0] &= m (\mathcal{S} {\setminus} A) + h (A) 
     \nonumber
     \\
     &= 1-(m (A) - h (A)).
\end{align}
Taking the infimum over all acceptance regions on both sides in \eqref{eqn:proof_lemma_1} yields
\begin{align*}
    \inf_\Psi \big\{ \mathbb{P}_{s\sim m} [ \Psi(s) \ne 1] + \mathbb{P}_{s\sim h} [\Psi(s) \ne 0] \big\} &= \inf_\Psi \big\{ 1-(m (A) - h (A)) \big\} \\
     &= 1 - \sup_\Psi \big\{ (m (A) - h (A)) \big\} \\
     &= 1 - \texttt{TV}(m, h).
\end{align*}

Next, we proceed to show that the ration-based detector $\Psi^*(s)$, defined in the statement of Lemma \ref{lecam_lemma}, achieves the infimum.
We first note that the acceptance region $A^*:=\{ s: m(s) \ge h(s)\}$ is a measurable set that is included in the collection of all acceptance regions since $\mathbb{I}_{\{ m(s)\ge h(s) \}}$ is a measurable function. Therefore,
\begin{align}\label{lema_1_lower_bound}
    \mathbb{P}_{s\sim m} [ \Psi^*(s) \ne 1] + \mathbb{P}_{s\sim h} [\Psi^*(s) \ne 0] \ge \inf_\Psi \big\{ \mathbb{P}_{s\sim m} [ \Psi(s) \ne 1] + \mathbb{P}_{s\sim h} [\Psi(s) \ne 0] \big\}.
\end{align}
On the other hand, for any measurable set $B$, we have $B{\setminus}A^*=\{s\in B: m(s) < h(s)\}$ and $A^*{\setminus}B=\{s\notin B: m(s) \ge h(s)\}$ by the definition of $A^*$. Therefore, by the sigma-additivity of measure, we have
\begin{align}\label{proof_lemma1_2}
    1 - (m (A^*) - h (A^*)) &= 1 - (m (A^*\cap B) - h (A^*\cap B)) - (m (A^*{\setminus} B) - h (A^*{\setminus} B)). 
\end{align}
In the right-hand side of \eqref{proof_lemma1_2}, we note that $m (A^*{\setminus} B) - h (A^*{\setminus} B) \ge 0$ because our detector is likelihood-ratio-based. This implies we can upper bound the right-hand side in  \eqref{proof_lemma1_2} by dropping the negative term as follows
\begin{align}\label{proof_lemma1_3}
    1 - (m (A^*) - h (A^*)) & \leq  1 - (m (A^*\cap B) - h (A^*\cap B)). 
\end{align}
Further from the definition of the ratio-based detector, we note that $m (B{\setminus} A^*) - h (B{\setminus} A^*) < 0$. This implies $-(m (B{\setminus} A^*) - h (B{\setminus} A^*)) > 0$ and we can upper bound the right hand side of \eqref{proof_lemma1_3} by adding just the positive number $-(m (B{\setminus} A^*) - h (B{\setminus} A^*))$ as follows, 
\begin{align}
    1 - (m (A^*) - h (A^*)) & \le 1 - (m (A^*\cap B) - h (A^*\cap B)) - (m (B{\setminus} A^*) - h (B{\setminus} A^*)).
\end{align}
From the sigma-additivity of measure, we can write
\begin{align}\label{final_lemma_ineq}
    1 - (m (A^*) - h (A^*)) & \le 
 1 - (m (B) - h (B)).
\end{align}
Since the inequality in \eqref{final_lemma_ineq} holds for any measurable set $B$, we can write 
\begin{align}\label{lema_1_upper_bound}
 \mathbb{P}_{s\sim m} [ \Psi^*(s) \ne 1] + \mathbb{P}_{s\sim h} [\Psi^*(s) \ne 0] \le \inf_\Psi \big\{ \mathbb{P}_{s\sim m} [ \Psi(s) \ne 1] + \mathbb{P}_{s\sim h} [\Psi(s) \ne 0] \big\}.
\end{align}
Hence, from the lower bound in \eqref{lema_1_lower_bound} and upper bound in \eqref{lema_1_upper_bound}, we conclude that $\Psi^*(s)$ achieves the infimum, which completes the proof.
\end{proof}

The Le Cam's lemma directly applies to our detector $D$ with threshold $\gamma$ by noting that any detector can be implemented via a detector with a threshold.
Indeed, define $D_\gamma: \mathcal{S} \rightarrow \{1,0\}$ via
$$ D_\gamma(s):= 
    \begin{cases} 1  &  D(s) \ge \gamma \\
                  0  &  D(s) < \gamma,
    \end{cases} $$
then it holds that $\{ \Psi : \mathcal{S} \rightarrow \{1, 0\} \} \subseteq \{D_\gamma: \mathcal{S} \rightarrow \{1, 0\}, D:\mathcal{S} \rightarrow \mathbb{R}, \gamma\in\mathbb{R}\}$ because for any $\Psi$, we can choose $D$ to be exactly the same as $\Psi$ (since $\{1, 0\}\in\mathbb{R}$) and set $\gamma=0.5$.

In fact, the detector $\Psi^*$ is exactly the likelihood-ratio-based detector which, by the Neyman-Pearson lemma \citep[Chapter 11]{cover1999elements}, is optimal in this (simple-vs.-simple) hypothesis test setting.

\textbf{Relationship to the tightness analysis in \citet{sadasivan2023can}.}
The authors of \citet{sadasivan2023can} provide a tightness analysis for their AUROC upper bound. The main part of the proof is to show the tightness of Equation~\ref{eqn:lecam}. Specifically, for any given human-generated text distribution $h$, they construct a machine-generated text distribution $m$ and a detector $D$ with some threshold $\gamma$, and show that the detector with the threshold achieves the equality in Equation~\ref{eqn:lecam}.
We note that their constructed detector with the threshold is exactly the likelihood-ratio-based detector. 
Moreover, a key difference between our result and theirs is that we show that the tightness can be achieved for any given distribution of $m$ and $h$ while they construct a specific $m$ given $h$.
While their specific construction of the machine distribution gives many insights into the problem, it is not necessary for achieving the tightness.
This difference also implies that we can be more optimistic about the problem since the classifier achieving the tightness exists for any machine-generated distributions.

\subsection{Proof of Theorem \ref{sample_complexity}}\label{sec:proof_sample_complexity}

The first part of the proof follows from the standard application of Chernoff's bounds \citep[Appendix A]{vadhan1999study}. From the statement of Theorem \ref{firsttheorem}, we note that the \texttt{AUROC} of the best possible detector is given by
 \begin{align}\label{main_equation_proof}
    \texttt{AUROC} = \frac{1}{2} + \texttt{TV} (m^{\otimes n}, h^{\otimes n}) - \frac{\texttt{TV} (m^{\otimes n}, h^{\otimes n})^2}{2}.
\end{align}
Let us start in a hard detection setting where $m(s)$ and $h(s)$ are really close and we know that $\texttt{TV}(m,h)=\delta$ where $\delta>0$ is small. From the definition of \texttt{TV} distance, we know that there exists some set $A\in\mathcal{S}$ such that given the samples $s^{m}\sim m(s)$ and $s^{h}\sim h(s)$ it holds 
\begin{align}
    \mathbb{P}(s^m\in A)-\mathbb{P}(s^h\in A) = \delta.
\end{align}
Let us define $\mathbb{P}(s^h\in A)=p$ which implies that $  \mathbb{P}(s^m\in A)=p+\delta$. Let us now collect $n$ samples $\{s_i\}_{i=1}^n$ from $m(s)$, we know that the probability of any sample  $s_i$ in $A$ is given by $p+\delta$. Hence, on average $(p+\delta)n$ number of samples will be in $A$. In a similar manner, if we have $n$ samples from $h(s)$, $pn$ will be in $A$ on average.  Therefore, we can utilize the Chernoff bound to write
\begin{align}
&\mathbb{P}\Bigg(\text{at least} \left(p+\frac{\delta}{2}\right)n \ \text{samples of $h$ are in $A$}\Bigg) \leq \exp^{\frac{-n\delta^2}{2}}
\nonumber
\\
&\mathbb{P}\Bigg(\text{at most} \left(p+\frac{\delta}{2}\right)n \ \text{samples of $m$ are in $A$}\Bigg) \leq \exp^{\frac{-n\delta^2}{2}}.
\end{align}
Now, let us denote the set of $n-$tuples by $A'$ which contains more than $\left(p+\frac{\delta}{2}\right)n$ samples of $A$. Therefore, we can bound
\begin{align}
    \texttt{TV} (m^{\otimes n}, h^{\otimes n})\geq &\mathbb{P}(\{s_i^m\}_{i=1}^n\in A')-\mathbb{P}(\{s_i^h\}_{i=1}^n\in A') 
    \nonumber
    \\
    \geq & (1-\exp^{\frac{-n\delta^2}{2}})-\exp^{\frac{-n\delta^2}{2}}
    \nonumber
    \\
    = & 1-2\exp^{\frac{-n\delta^2}{2}}.\label{TV_norm_lower_bound}
\end{align}
The \texttt{TV} norm lower bound in \eqref{TV_norm_lower_bound} tells us the minimum value of $ \texttt{TV} (m^{\otimes n}, h^{\otimes n})$ for given $n$ and $\delta$. Therefore, if we need to obtain the AUROC of the best possible detector to be equal to, or higher than say $\epsilon\in[0.5,1]$, which means we want
 \begin{align}\label{main_equation_proof2}
   \frac{1}{2} + \texttt{TV} (m^{\otimes n}, h^{\otimes n}) - \frac{\texttt{TV} (m^{\otimes n}, h^{\otimes n})^2}{2}\geq  \epsilon.
\end{align}
Now, since the left-hand side is the monotonically increasing function of $\texttt{TV} (m^{\otimes n}, h^{\otimes n})$, it holds from the minimum value in \eqref{TV_norm_lower_bound} that
\begin{align}
      \frac{1}{2} + (1-2\exp^{\frac{-n\delta^2}{2}}) - \frac{(1-2\exp^{\frac{-n\delta^2}{2}})^2}{2} \geq  \epsilon.
\end{align}
After expanding the squares, we get
\begin{align}
      \frac{1}{2} + (1-2\exp^{\frac{-n\delta^2}{2}}) - \frac{1}{2} - 2 \exp^{{-n\delta^2}} + 2\exp^{\frac{-n\delta^2}{2}} \geq  \epsilon.
\end{align}
After rearranging the terms, we get
\begin{align}
      \frac{1  -   \epsilon}{2} \geq  \exp^{{-n\delta^2}}.
\end{align}
Taking log on both sides  and rearranging terms yields
\begin{align}
        {{n}} \geq  \frac{1}{\delta^2}\log\left(\frac{2}{1  -   \epsilon}\right)  .
\end{align}
Hence proved.

\subsection{Proof of Theorem \ref{sample_complexity_noniid}}\label{sample_complexity_noniid_proof}
%
%
Before starting the analysis, let us restate the following bound from \citep{dhurandhar2013auto} for quick reference. 
\begin{lemma}[\textbf{Upper Bound for Non-iid scenario}]\label{Non-iid}
	Let $n$ be the number of samples drawn sequentially from $\mathbb{P}(S_1, S_2 \cdots S_n) = \prod_{j = 1}^{L} \tau_j$, where $\tau_j$ are independent subsets consisting of $c_j$ dependent sequences $(s_1, s_2 \cdots s_{c_j})$ such that $\sum_{j =1}^{L} c_j = n$.
	Under dependence structure in \eqref{assumption_noniid}, for any $\delta > \frac{ \sum_{l =1}^{L}(c_j -1)\rho_j}{n}$, it holds that
	\begin{align}
		\mathbb{P}(|\bar{S} - \mathbb{E}[\bar{S}]| \geq \delta) \leq 2 \exp{\frac{-2(n \delta - \sum_{j =1}^{L}(c_j -1)\rho_j)^2}{n}},
	\end{align}
	where $\bar{S} = \frac{1}{n} \sum_{n = 1}^{n} s_i$ and $\mathbb{E}[S_i | S_{i-1} = s_{i-1}, \cdots, S_1 = s_1] = \frac{\rho}{i-1} \sum_{k = 1}^{i-1} s_k  + (1- \rho)\mathbb{E}[S_i]$.
\end{lemma}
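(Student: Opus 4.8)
The plan is to reproduce the Doob-martingale argument underlying \citep{dhurandhar2013auto}: decompose the centered sum into a martingale-difference part (to which Azuma--Hoeffding applies with variance proxy $n$) and a deterministic drift part (controlled by the $\rho_j$'s via \eqref{assumption_noniid}), using the block independence to glue the within-block filtrations together. Throughout I assume, as in the application where the relevant statistic is an indicator $\mathbb{I}\{s_i \in A\}$, that each $S_i$ is real-valued with values in an interval of length one. Let $\mathcal{F}_i=\sigma(S_1,\dots,S_i)$ and write
\begin{align}
S_i - \mathbb{E}[S_i] = \big(S_i - \mathbb{E}[S_i\mid\mathcal{F}_{i-1}]\big) + \big(\mathbb{E}[S_i\mid\mathcal{F}_{i-1}] - \mathbb{E}[S_i]\big) =: D_i + R_i .
\end{align}
Because $\tau_1,\dots,\tau_L$ are independent, conditioning on $\mathcal{F}_{i-1}$ is the same as conditioning on the earlier elements of the block containing $S_i$, so \eqref{assumption_noniid} identifies $\mathbb{E}[S_i\mid\mathcal{F}_{i-1}]$ with the block-specific rule (using $\rho_j$) whenever $S_i$ is not the first element of its block, and $D_i$ is a genuine $\mathcal{F}_i$-martingale difference.

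First I would bound the drift $\sum_{i=1}^n R_i$. If $S_i$ is the first element of a block then $\mathbb{E}[S_i\mid\mathcal{F}_{i-1}]=\mathbb{E}[S_i]$ and $R_i=0$; otherwise, if $S_i$ is the $\ell$-th element of block $\tau_j$ with $\ell\ge 2$, then by \eqref{assumption_noniid}, $R_i=\rho_j\big(\tfrac{1}{\ell-1}\sum_{k}S_k-\mathbb{E}[S_i]\big)$ with the sum over the earlier elements of that block. Since the running average and $\mathbb{E}[S_i]$ both lie in the same unit-length interval, $|R_i|\le\rho_j$. There are exactly $c_j-1$ nonzero terms in block $j$, so $\big|\sum_{i=1}^n R_i\big| \le \sum_{j=1}^L (c_j-1)\rho_j =: C$, a deterministic bound.

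Next I would apply the Azuma--Hoeffding inequality to $\sum_{i=1}^n D_i$: the $D_i$ are martingale differences, each supported (conditionally) in an interval of length at most one, so $\mathbb{P}\big(\big|\sum_{i=1}^n D_i\big|\ge u\big) \le 2\exp(-2u^2/n)$ for every $u>0$. Finally I would combine: since $n\bar S - n\mathbb{E}[\bar S] = \sum_i D_i + \sum_i R_i$, the hypothesis $n\delta>C$ gives $\{|\bar S-\mathbb{E}[\bar S]|\ge\delta\}\subseteq\{|\sum_i D_i|\ge n\delta-C\}$, and taking $u=n\delta-C$ yields
\begin{align}
\mathbb{P}(|\bar S-\mathbb{E}[\bar S]|\ge\delta) \le 2\exp\!\left(\frac{-2\big(n\delta-\sum_{j=1}^L(c_j-1)\rho_j\big)^2}{n}\right),
\end{align}
which is the claimed bound.

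The main obstacle is making the martingale decomposition fully rigorous: one must verify that block independence genuinely lets \eqref{assumption_noniid} be read as a statement about $\mathbb{E}[S_i\mid\mathcal{F}_{i-1}]$ (so that $D_i$ is an $\mathcal{F}_i$-martingale difference and $R_i$ has the stated form), and one must be explicit about the boundedness hypothesis on the $S_i$ that simultaneously yields $|R_i|\le\rho_j$ and the unit-range input needed by Azuma--Hoeffding; the remaining steps are bookkeeping. Since the statement is quoted from \citep{dhurandhar2013auto}, an acceptable alternative is to cite that reference for the complete argument.
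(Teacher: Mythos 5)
Your proposal is correct, but it does more than the paper does: the paper never proves Lemma~\ref{Non-iid} at all, it simply restates the bound from \citep{dhurandhar2013auto} "for quick reference" and then uses it in the proof of Theorem~\ref{sample_complexity_noniid}. Your martingale-plus-drift argument is a sound self-contained derivation and matches the claimed constant exactly: writing $S_i-\mathbb{E}[S_i]=D_i+R_i$ with $D_i$ a martingale difference, noting $R_i=0$ at the start of each block and $|R_i|\le\rho_j$ otherwise (so the drift is deterministically at most $\sum_{j}(c_j-1)\rho_j$), applying Azuma--Hoeffding with unit conditional ranges to get $2\exp(-2u^2/n)$, and shifting by the drift with $u=n\delta-\sum_j(c_j-1)\rho_j$ reproduces the stated inequality. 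What your route buys, relative to the paper's citation, is that it surfaces two hypotheses the lemma statement leaves implicit: (i) the samples must be bounded in a unit-length interval (harmless here, since the lemma is applied to indicators $\mathbb{I}\{s_i\in A\}$ in the proof of Theorem~\ref{sample_complexity_noniid}), and (ii) the dependence condition \eqref{assumption_noniid}, as written, conditions on the entire past, which is only consistent with block independence if it is read per block with parameter $\rho_j$ and the within-block running average --- exactly the reading you adopt, and the one needed for the $(c_j-1)\rho_j$ drift count. Flagging these (or, as you note, simply citing \citep{dhurandhar2013auto} as the paper does) is the right way to close the argument.
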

Lemma B.3 provided upper bounds for non-iid scenarios, with an exponential bound in sample size $n$ along with an additional dependence on the strength of association $\rho_j$ and the size of the dependent sequence $c_j$. It is important to note that when we have $\rho = 0$, it exactly boils down to the standard Chernoff bound.

Now, we move to do the sample complexity analysis for the non-iid setting. Similar to the proof for the iid case, we define $\mathbb{P}(s^h\in A)=p$ which implies that $  \mathbb{P}(s^m\in A)=p+\delta$. Let us now collect $n$ samples sequentially $\{s_i\}_{i=1}^n$ from $m(s)$, we know that the probability of any sample  $s_i$ in $A$ is given by $p+\delta$. Hence, on average $(p+\delta)n$ number of samples will be in $A$. In a similar manner, if we have $n$ samples from $h(s)$, $pn$ will be in $A$ on average.  Therefore, we can utilize the Chernoff bound to write

%
\begin{align}
&\mathbb{P}\Bigg(\text{at least} \left(p+\frac{\delta}{2}\right)n \ \text{samples of $h$ are in $A$}\Bigg) \leq 2 e^{\frac{-2\big(n \frac{\delta}{2} - \sum_{j =1}^{L}(c_j -1)\rho_j\big)^2}{n}}
\nonumber
\\
&\mathbb{P}\Bigg(\text{at most} \left(p+\frac{\delta}{2}\right)n \ \text{samples of $m$ are in $A$}\Bigg) \leq 2 e^{\frac{-2\big(n \frac{\delta}{2} - \sum_{j =1}^{L}(c_j -1)\rho_j\big)^2}{n}},
\end{align}
where for simplicity of notations let's consider $\beta = - \frac{2\left(n \frac{\delta}{2} - \sum_{j =1}^{L}(c_j -1)\rho_j\right)^2}{n}$.
Now, let us denote the set of $n$ tuples by $A'$ which contains more than $\left(p+\frac{\delta}{2}\right)n$ samples of $A$. Therefore, we can bound
\begin{align}
    \texttt{TV} (m^{\otimes n}, h^{\otimes n})\geq &\mathbb{P}(\{s_i^m\}_{i=1}^n\in A')-\mathbb{P}(\{s_i^h\}_{i=1}^n\in A') 
    \nonumber
    \\
    = & 1-4\exp^{\beta}.\label{TV_norm_lower_bound_niid}
\end{align}
The \texttt{TV} norm lower bound in \eqref{TV_norm_lower_bound_niid} tells us the minimum value of $ \texttt{TV} (m^{\otimes n}, h^{\otimes n})$ for given $n$ and $\delta$. Therefore, to obtain the \texttt{AUROC} of the best possible detector to be equal to, or higher than say $\epsilon\in[0.5,1]$, it should hold that
 \begin{align}\label{main_equation_proof22}
   \frac{1}{2} + \texttt{TV} (m^{\otimes n}, h^{\otimes n}) - \frac{\texttt{TV} (m^{\otimes n}, h^{\otimes n})^2}{2}\geq  \epsilon.
\end{align}
Since the left-hand side in \eqref{main_equation_proof22} is the monotonically increasing function of $\texttt{TV} (m^{\otimes n}, h^{\otimes n})$, it holds from the minimum value in \eqref{TV_norm_lower_bound} that
\begin{align}
      \frac{1}{2} + (1- 4\exp^{\beta}) - \frac{(1-4\exp^{\beta})^2}{2} \geq  \epsilon.
\end{align}
After expanding the squares, we get
\begin{align}
      \frac{1}{2} + 1-4\exp^{\beta} - \frac{1}{2} - 8 \exp^{2\beta} + 4\exp^{\beta} \geq  \epsilon,
\end{align}
which implies 
\begin{align}
      \frac{1  -   \epsilon}{8}  &\geq  \exp^{2 \beta} = \exp^{- \frac{4\left(n \frac{\delta}{2} - \sum_{j =1}^{L}(c_j -1)\rho_j\right)^2}{n}},
\end{align}
where substitute  the value of $\beta$ and taking logarithm on both sides, we get
\begin{align}
        \log \left(\frac{8}{1 -\epsilon}\right) & \leq \frac{4}{n}\left(n \frac{\delta}{2} - \sum_{j =1}^{L}(c_j -1)\rho_j \right)^2 \\ \nonumber
        & = \frac{4}{n}\left(n \frac{\delta}{2} - \sum_{j =1}^{L}(c_j -1)\rho_j \right)^2 \\ \nonumber
        & = n \delta^2 - 4 \delta \left(\sum_{j =1}^{L}(c_j -1)\rho_j\right) + \frac{4}{n} \left(\sum_{j =1}^{L}(c_j -1)\rho_j\right)^2.
\end{align}
Let's denote $\alpha = \sum_{j =1}^{L}(c_j -1)\rho_j$ and $\gamma (\epsilon)= \log \left(\frac{8}{1 -\epsilon}\right)$, for simplicity of calculations. The quadratic inequality from the above equation boils down to solving
\begin{align} \label{quadratic_exp}
    \delta^2 n^2 - n(4 \alpha \delta + \gamma(\epsilon)) + 4 \alpha^2  \geq 0,
\end{align}
which is in the form of a standard quadratic equation and the corresponding solution is given by
\begin{align}
    n & \geq  \frac{\gamma(\epsilon)}{2 \delta^2} + 2 \frac{\alpha}{\delta} + \frac{1}{2 \delta^2} \sqrt{(4 \alpha \delta +\gamma(\epsilon))^2 -  16 \alpha^2 \delta^2} \\ \nonumber
    & = \frac{\gamma(\epsilon)}{2 \delta^2} + 2 \frac{\alpha}{\delta} + \frac{1}{2 \delta^2} \sqrt{\gamma(\epsilon)^2  + 8\alpha \delta \gamma(\epsilon)} \\ \nonumber 
    & = \frac{\gamma(\epsilon)}{2 \delta^2} 
    + \frac{2}{\delta} \sum_{j =1}^{L}(c_j -1)\rho_j
    +  \frac{1}{2 \delta^2} \sqrt{(\gamma(\epsilon))^2  + 8 \left(\sum_{j =1}^{L}(c_j -1)\rho_j\right) \delta \gamma(\epsilon)}.
\end{align}
Now, we further expand upon the expression as
\begin{align}
    n & \geq   \frac{1}{2 \delta^2} \gamma(\epsilon)
    + \frac{2}{\delta} \sum_{j =1}^{L}(c_j -1)\rho_j
    +  \frac{1}{\sqrt{2} \delta^2} \sqrt{\frac{1}{2}\left((\gamma(\epsilon))^2  + 8 \left(\sum_{j =1}^{L}(c_j -1)\rho_j\right) \delta \gamma(\epsilon)\right)} \nonumber
    \\ 
    & \geq \frac{1}{2 \delta^2} \gamma(\epsilon)
    + \frac{2}{\delta} \sum_{j =1}^{L}(c_j -1)\rho_j
    + \frac{1}{2 \sqrt{2} \delta^2} \gamma(\epsilon)
    + \frac{1}{\sqrt{2} \delta^2} \sqrt{2 \left(\sum_{j =1}^{L}(c_j -1)\rho_j\right) \delta \gamma(\epsilon)},
\end{align}

where, the first-term results from multiplying and dividing by a constant factor $2$, and the second term is an application of Jensen's inequality for convex functions. Using the order notation, we obtain
\begin{align}
	n=\Omega\left(\frac{1}{\delta^2} \log\left(\frac{1}{1  -   \epsilon}\right)
	+ \frac{1}{\delta} \sum_{j =1}^{L}(c_j -1)\rho_j
	+  \sqrt{\frac{1}{\delta^3}\log\left(\frac{1}{1  -   \epsilon}\right)\left(\sum_{j =1}^{L}(c_j -1)\rho_j\right) }\right).
\end{align}
\section{Additional Figures of Experimental Results}\label{additional_Experiments}
\begin{figure}[!htbp]
    \centering
\subfloat[\centering IMDb dataset \citep{maas2011learning}]{{\includegraphics[width=0.8\columnwidth]{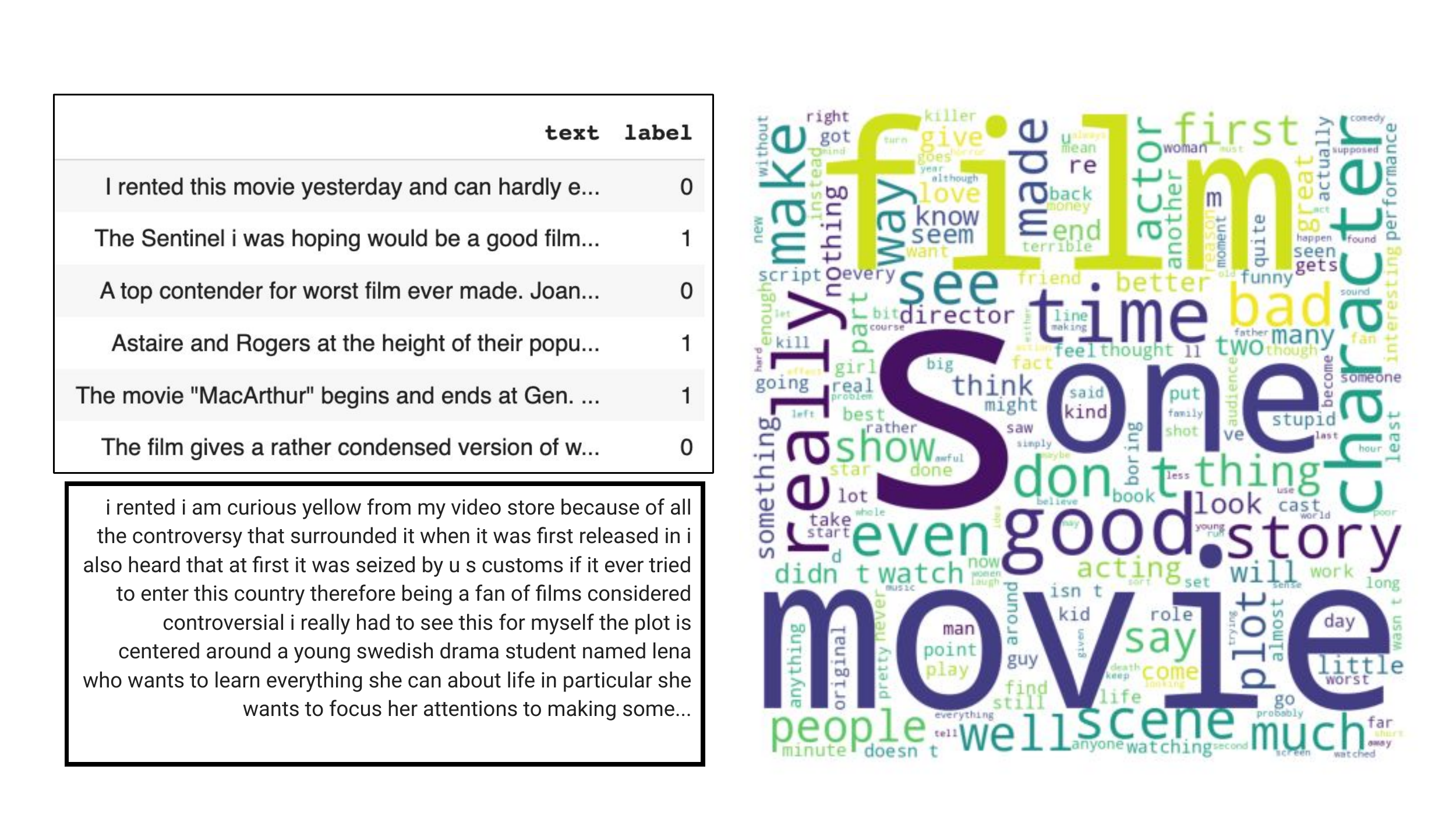} }}%
\\
     \subfloat[\centering Paragraph representation space 
     ]{{\includegraphics[width=0.9\columnwidth]
     {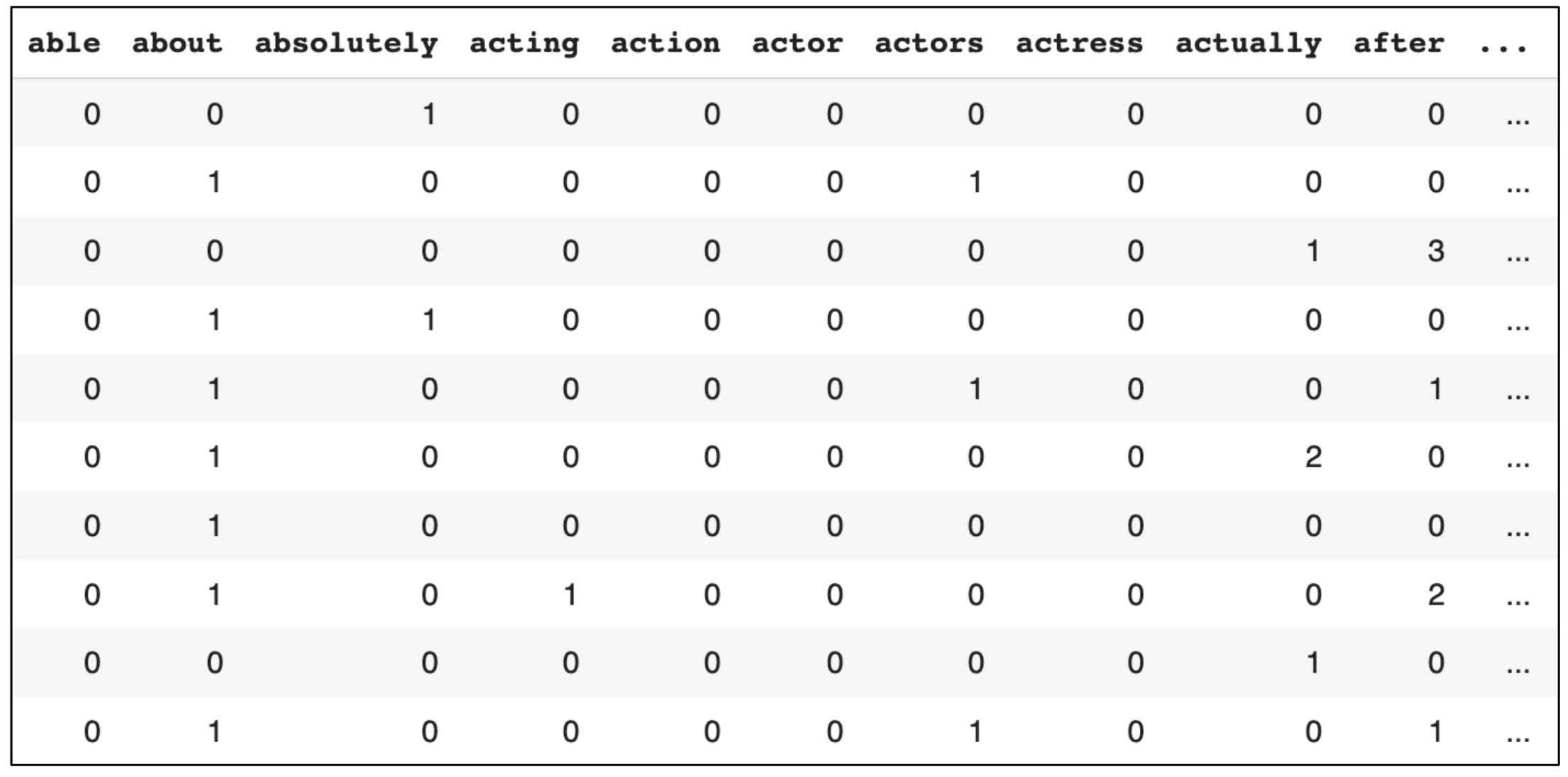} }}%
     \caption{Figure \ref{fig:example_dataset}(a) (left) shows examples of textual paragraphs and corresponding labels present in the IMDb dataset. It also highlights part of one random paragraph (one input) showing that in general, having a lot of sentences as input for detection is very common and practical. Figure 4(a) (right) represents the word-cloud representation of the word distribution based on which the word-level total variation is estimated. Figure \ref{fig:example_dataset}(b) denotes the representation of the input paragraph using a Bag-of-Words-based count vectorizer for our algorithms and proving our hypothesis. It demonstrates paragraph representation space with Bag-of-Words-based count vectorizer where each row indicates one review.}
    \label{fig:example_dataset}%
\end{figure}

\begin{figure}[!htbp]
             \centering      
             \includegraphics[trim={10mm 15mm 10mm 0mm},clip, width=0.9\columnwidth]
             {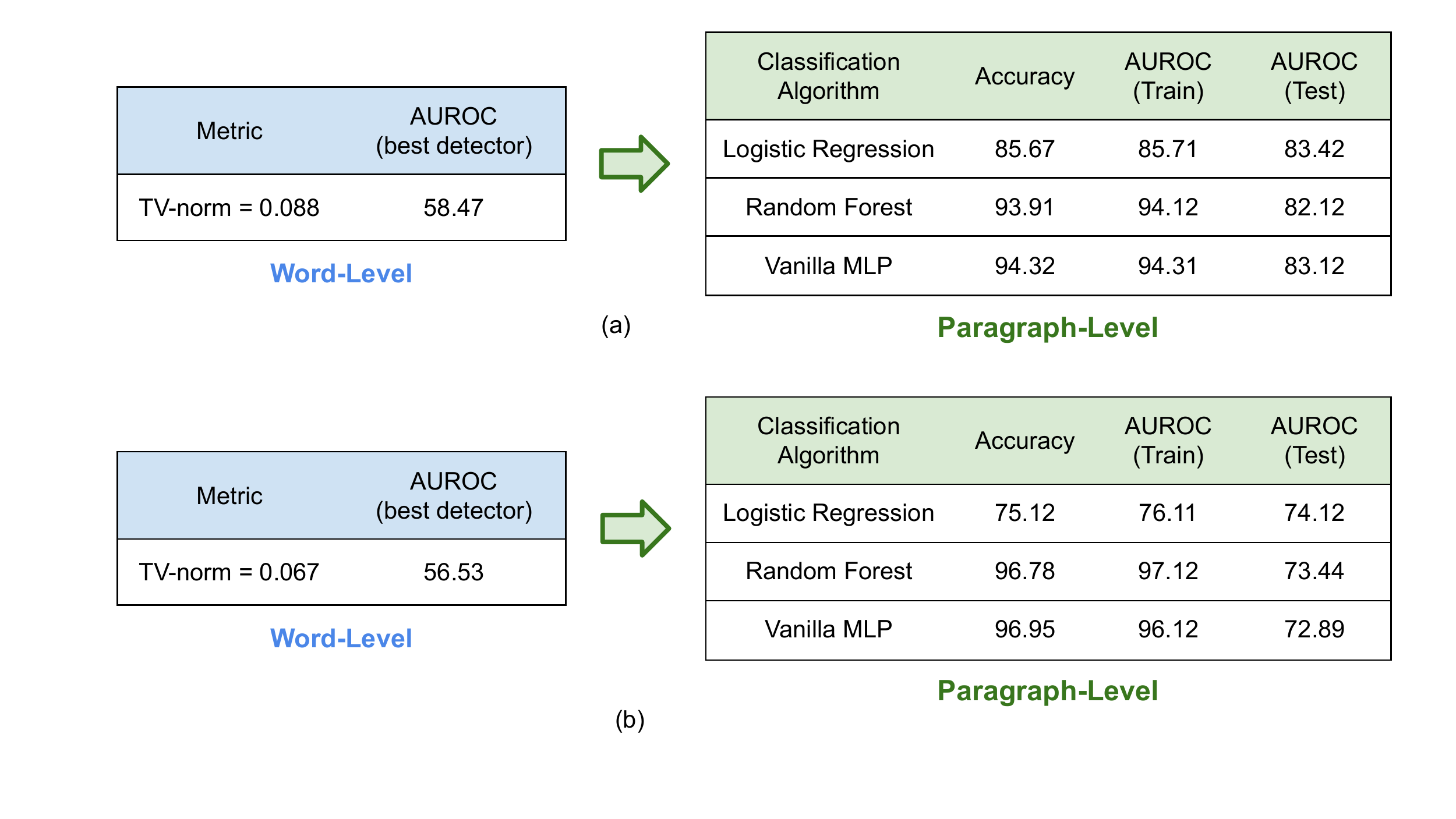}
             \caption{The table in Figure \ref{fig:my_label}(a) (left) represents the total variation norm distance at a word level i.e input to the detector is the word and one needs to detect if it's a positive or negative class (human or machine in our context). It also shows the AUROC that can be achieved by the best detector based on the total variation norm as shown in \citep{sadasivan2023can}. Figure \ref{fig:my_label}(b) (right)
             shows the accuracy and AUROC achieved by real detectors (standard machine learning algorithms) at a paragraph level, where each input to the detector is a paragraph or a group of sentences. It is evident that at a paragraph level, even a simple untuned ML detector can achieve a very high AUROC of more than 85\%, which was very low at a word level. Similarly, in the tables in Figure \ref{fig:my_label}(b),  we observe a similar behavior as we increase the hardness of the problem by reducing the number of sentences from the passage. We note that the AUROC achieved by the real detector decreases but is still much larger than the word-level best detector's AUROC which validates our claims.}
             \label{fig:my_label}
         \end{figure}

\subsection{Additional Experimental Details} \label{additional_experiments}

\textbf{IMDb Dataset Experiments.} To validate our claims on the possibilities of detection, we run experiments on the IMDb dataset \citep{maas2011learning}, which is a widely-used benchmark dataset in the field of natural language processing. The dataset consists of $50,000$ movie reviews from the internet movie database that have been labeled as positive or negative based on their sentiment. The goal is to classify the reviews accordingly based on their text content. The experiments are done to validate our hypothesis on a more general class of language tasks including classification and detection. We specifically focus on the representation space of the inputs for both the human and machine distributions and try to validate our hypothesis by comparing the input space of words  to the input space of a group of sentences. The objective is to analyze the variations in performance of the detector when detecting at word-level versus paragraphs. Hence, there are two scenarios to consider. The first is where we’re given a word and we have to determine whether it came from positive or negative class. The second, and more practical case, is where we’re given a paragraph i.e a group of sentences and we have to detect whether it came from positive or negative class.

So, we first compute the total variation distance between the positive and negative classes at the word level. This is done by computing the divergence between the distribution over the space of words between the two classes.
Figure \ref{fig:my_label}(a) shows that the best possible \texttt{AUROC} achieved by the detector is $0.585$ at the word level. From these results, it seems almost impossible to distinguish the two classes. However when we perform the detection at a paragraph level using a real detector (standard ML models, including random forest, logistic regression, and a vanilla multi-layer perception), we see a remarkable improvement in the detection performance. As shown in Figure \ref{fig:my_label}(a), all the real detectors achieve a  train \texttt{AUROC} of greater than $0.85$ ($\geq 0.93$ for random forest and MLP), and a test \texttt{AUROC} of greater than $0.8$, which surpasses the upper-bounds of the best detector at a word level, validating our theory and intuition. This impressive performance is fully aligned with our claims and provides  evidence that designing a detector with high performance for AI-generated text is always possible even for general NLP detection tasks.\\

\noindent \textbf{IMDb NLP Dataset Experiments with Increased Hardness.} To provide additional confirmation of the efficacy of our claim, we made the experimental setting more challenging by randomly decreasing the number of sentences in each review, making it difficult for any genuine detector or classifier to distinguish. In this scenario, we again compared the performance to the previous scenario and observed that all the methods were able to achieve a test \texttt{AUROC} greater than 0.7, which is lower than the previous case. This result supports our hypothesis that as the number of samples/sentences increases, detection accuracy improves.

We conducted a similar experiment on a Fake News classification dataset \citep{fakenews}, and the results were consistent with our previous findings. This indicates that AI-generated text can be detected, although we need to be cautious and gather more samples as the distribution becomes closer.

\begin{figure}[t]
    \centering
    \vspace{-6mm}
                 \includegraphics[scale=0.5]{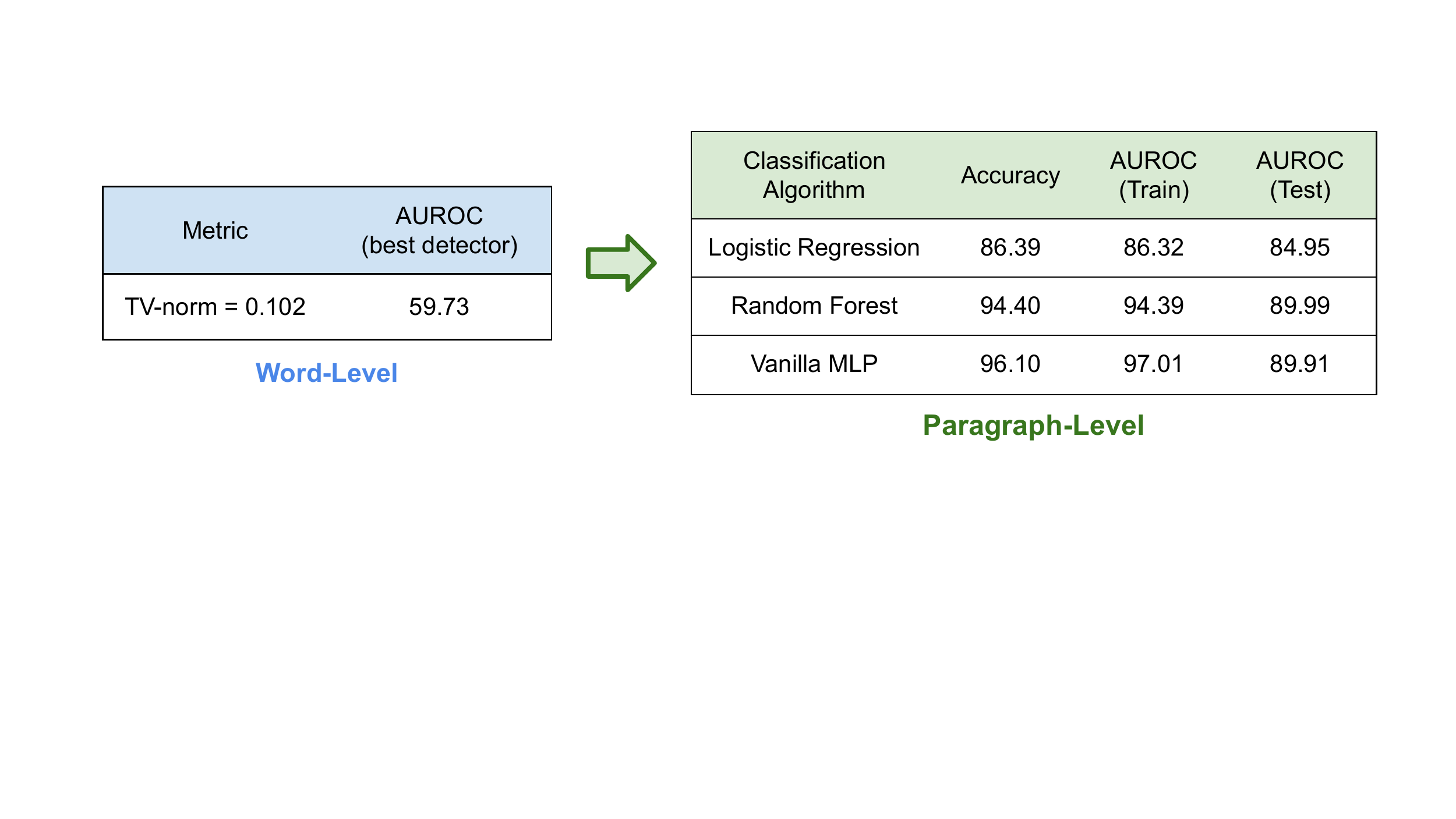}
    \vspace{-30mm}
    \caption{The table in Figure \ref{fig:second_Experiment_second}(a) (left) represents the total variation norm distance at a word level for the Fake News dataset \citep{fakenews}. It shows the AUROC that can be achieved by the best detector based on the total variation norm as shown is 59.73\%. Figure \ref{fig:second_Experiment_second}(b) (right)
     shows the accuracy and AUC achieved by a real detector at a paragraph level goes up to 90\%, which validates our hypothesis for a general class of NLP tasks}
    \label{fig:second_Experiment_second}
\end{figure}

We would like to emphasize that the purpose of this experimentation is to demonstrate our hypothesis regarding the feasibility of detection rather than to showcase the accuracy of classification. This is because the accuracy of classification is already well-established, with a simple pre-trained BERT-based model being capable of achieving high accuracy.

\begin{figure}[t]
     \centering
     \begin{subfigure}[b]{0.45\textwidth}
         \centering
    \includegraphics[width=\textwidth]{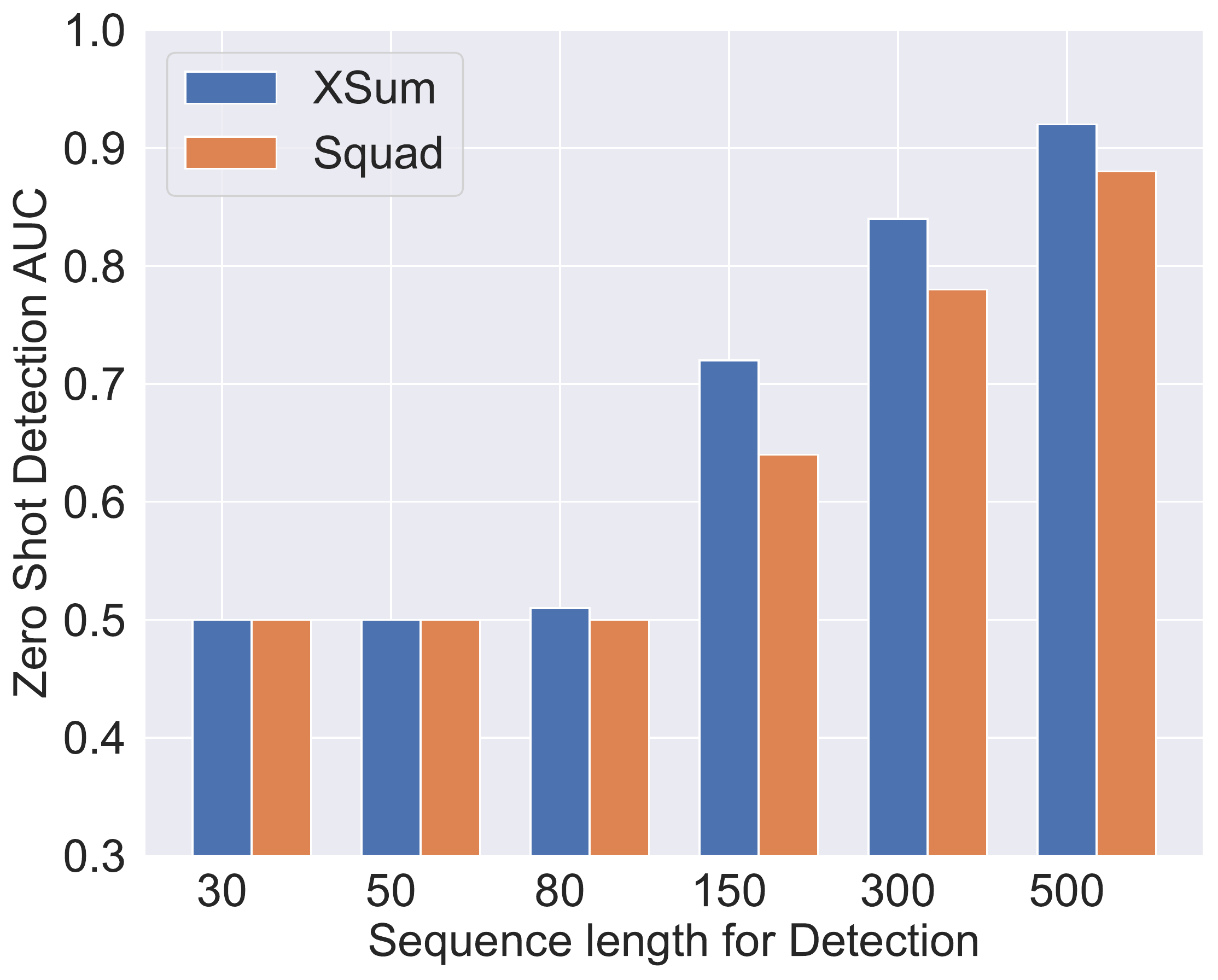}
         \caption{}
         \label{fig:roberta_base}
     \end{subfigure}
     \begin{subfigure}[b]{0.45\textwidth}
         \centering
    \includegraphics[width=\textwidth]{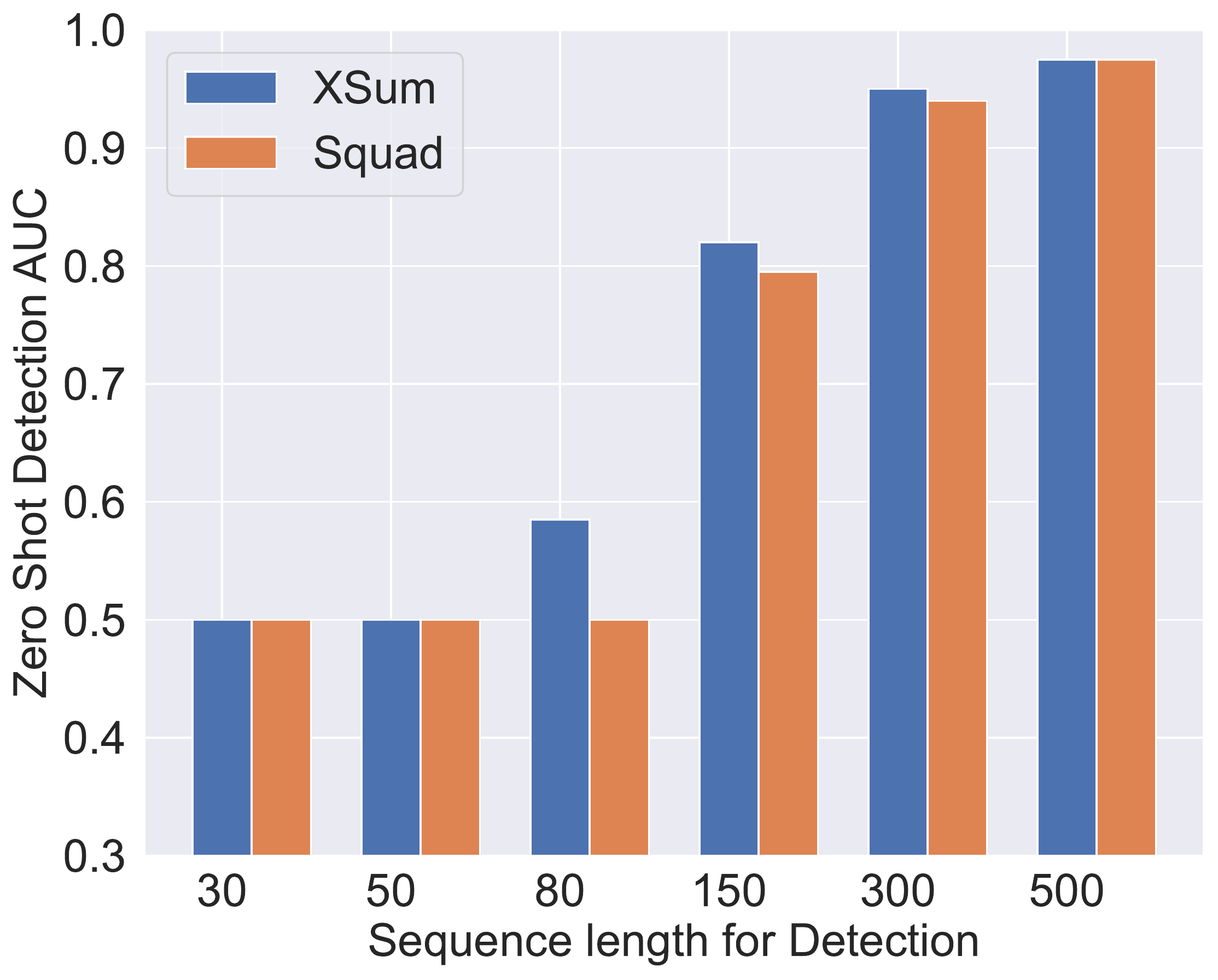}
         \caption{}
         \label{fig:roberta_large}
     \end{subfigure}
    \caption{\textbf{(a)-(b)} validates our theorem for real human-machine classification datasets generated with XSum \& Squad, with zero-shot detection performance. We use the RoBERTa-Base-Detector (\ref{fig:roberta_base}) and RoBERTa-Large-Detector (\ref{fig:roberta_large}) from OpenAI which are trained or fine-tuned for binary classification with datasets containing human and AI-generated texts. We observe that with the increase in the number of samples or sequence length for detection, the zero-shot detection performance from both the models improves drastically from around 50\% to 97\% for both Xsum and Squad human-machine datasets. }
    \label{additional}
\end{figure}

\begin{figure}[t]
     \centering
     \begin{subfigure}[b]{0.45\textwidth}
         \centering
    \includegraphics[width=\textwidth]{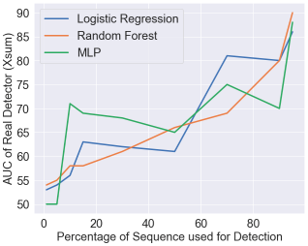}
         \caption{}
         \label{fig:roberta_base22}
     \end{subfigure}
     \begin{subfigure}[b]{0.45\textwidth}
         \centering
    \includegraphics[width=\textwidth]{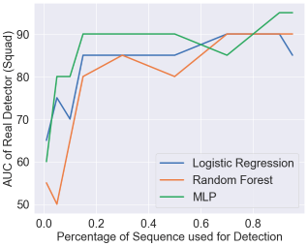}
         \caption{}
         \label{fig:roberta_large22}
     \end{subfigure}
    \caption{(a) demonstrates the detection performance of Vanilla classifiers/detectors on the Xsum dataset (Randomly sampled) generated by GPT-2. (b) demonstrates the detection performance of Vanilla classifiers/detectors on the Squad dataset (Randomly sampled) generated by GPT-2. This shows that even for vanilla detectors, our result holds for random subsets of the data. 
}
    \label{additional2222}
\end{figure}

\section{Detailed Conclusion \& Scope of Future Works}
We note that it becomes harder to detect the AI-generated text when $m(s)$ is close to $h(s)$, and paraphrasing attacks can indeed reduce the detection performance as shown in our experiments. However, we assert that by collecting more samples/sentences, it will be possible to increase the attainable area under the receiver operating characteristic curve (\texttt{AUROC}) sufficiently greater than $1/2$, and hence make the detection possible. We further remark that it would be quite difficult to make LLMs exactly equal to human distributions due to the vast diversity within the human population, which may require a large number of samples from an information-theoretic perspective and provides a lower bound on the closeness distance to human distributions. Diversity could lead to realistic analysis to prove that the distributions are sufficiently separated to be detectable. 

We want to emphasize that as we show detectability is always possible (unless $m = h$ in exactness), in several scenarios when $m$ and $h$ are very close, it might need a lot of samples to detect. However, watermark-based techniques can help address this issue by causing shifts in the distributions. The additional insights from our work could help to design better watermarks, which cannot be attacked easily with paraphrases. More specifically, it is possible to create more powerful and robust watermarks to introduce a minor change in the machine distributions, and then collecting more samples should help to perform the AI-generated text detection. 

While there are potential risks associated with detectors, such as misidentification and false alarms, we believe that the ideal approach is to strive for more powerful, robust, fair, and better detectors and more robust watermarking techniques. We believe that addressing issues such as representation space, robust watermarks, and interpretability is crucial for the safe and trustworthy application of generative language models and detection. To that end, we are hopeful, based on our results, that text detection is indeed possible under most of the settings and that these detectors could help mitigate the misuse of LLMs and ensure their responsible use in society.

\end{document}